\documentclass[11pt]{article}

\usepackage[final]{acl}

\usepackage{times}
\usepackage{amssymb}
\usepackage{threeparttable, tabularx}
\usepackage{booktabs}
\usepackage{orcidlink}
\newcommand{\up}[1]{\tiny ($\textcolor{green}{\blacktriangle}#1\%)$}

\usepackage{algorithm} 
\usepackage{algpseudocode}
\usepackage[]{algorithmicx}
\usepackage{mathtools}
\usepackage{amsthm}
\usepackage[textsize=tiny]{todonotes}

\newcommand{\cS}{\mathcal{S}}

\newcommand{\name}{\textsc{DEARICL}}
\newcommand{\cA}{\mathcal{A}}
\newcommand{\cE}{\mathcal{E}}
\usepackage{microtype}
\usepackage{graphicx}
\usepackage{tcolorbox}

\usepackage{subcaption}

\usepackage{amsmath}
\usepackage{pgfplots}
\usepackage{xcolor}
\usepackage{colortbl}
\definecolor{blue}{RGB}{17,220,247}
\definecolor{purple}{RGB}{163,115,250}
\definecolor{caribbeangreen}{rgb}{0.0, 0.8, 0.6}

\definecolor{bluecolor}{RGB}{0,0,255}

\definecolor{GREEN}{RGB}{84,130,53}

\newcommand{\colorg}{\cellcolor{gray!15}}
\usepackage{wrapfig}

\newcommand{\cV}{\mathcal{V}}

\usepackage{latexsym}
\pgfplotsset{compat=1.15}
\usepgfplotslibrary{
  statistics,
  colorbrewer,
  groupplots,
}
\usetikzlibrary{
  patterns,
  shapes.geometric,
  decorations.text,
  matrix,
  fit,
  backgrounds,
  positioning,
}
\tikzset{
  fignode/.style={
    outer sep=0.25em,
  }
}
\tikzset{
  framedfignode/.style={
    outer sep=0.25em,
    inner sep=0.5em,
    rounded corners,
    draw,
  }
}
\colorlet{plotColorNeutral}{gray}
\definecolor{plotColor1}{HTML}{f61a1c}
\definecolor{plotColor2}{HTML}{377eb8}
\definecolor{plotColor3}{HTML}{4daf4a}
\definecolor{plotColor4}{HTML}{984ea3}
\definecolor{plotColor5}{HTML}{FFFFCB}
\definecolor{plotColor6}{HTML}{1e90ff}
\colorlet{plotColorNeutral*}{plotColorNeutral!40}
\colorlet{plotColor1*}{plotColor1!60}
\colorlet{plotColor2*}{plotColor2!60}
\colorlet{plotColor3*}{plotColor3!60}
\colorlet{plotColor4*}{plotColor4!60}
\colorlet{plotColor5*}{plotColor5!60}
\colorlet{plotColor6*}{plotColor6!60}
\pgfplotsset{
    colormap={greenred}{HTML=(4daf4a) HTML=(e41a1c)},
    colormap={redgreen}{HTML=(e41a1c) HTML=(4daf4a)}
}

\newcommand{\EMPGAP}[3]{\hat{\Delta}_{#3}(#1, #2)}
\newcommand{\HA}[1]{\text{H}^{\varepsilon}(\rho)}

\newcommand{\namedyn}{\textsc{DearICL$_{dynamic}$}}
\newcommand{\namestatic}

\pgfplotsset{compat=1.15}
\usepgfplotslibrary{
  statistics,
  colorbrewer,
  groupplots,
}
\usetikzlibrary{
  patterns,
  shapes.geometric,
  decorations.text,
  matrix,
  fit,
  backgrounds,
  positioning,
}
\tikzset{
  fignode/.style={
    outer sep=0.25em,
  }
}
\tikzset{
  framedfignode/.style={
    outer sep=0.25em,
    inner sep=0.5em,
    rounded corners,
    draw,
  }
}
\colorlet{plotColorNeutral}{gray}
\definecolor{plotColor1}{HTML}{f61a1c}
\definecolor{plotColor2}{HTML}{377eb8}
\definecolor{plotColor3}{HTML}{4daf4a}
\definecolor{plotColor4}{HTML}{984ea3}
\definecolor{plotColor5}{HTML}{FFFFCB}
\definecolor{plotColor6}{HTML}{1e90ff}
\colorlet{plotColorNeutral*}{plotColorNeutral!40}
\colorlet{plotColor1*}{plotColor1!60}
\colorlet{plotColor2*}{plotColor2!60}
\colorlet{plotColor3*}{plotColor3!60}
\colorlet{plotColor4*}{plotColor4!60}
\colorlet{plotColor5*}{plotColor5!60}
\colorlet{plotColor6*}{plotColor6!60}
\pgfplotsset{
    colormap={greenred}{HTML=(4daf4a) HTML=(e41a1c)},
    colormap={redgreen}{HTML=(e41a1c) HTML=(4daf4a)}
}

\usepackage[T1]{fontenc}

\usepackage[utf8]{inputenc}

\usepackage{microtype}
\theoremstyle{plain}
\newtheorem{theorem}{Theorem}[section]

\newtheorem{lemma}[theorem]{Lemma}

\theoremstyle{definition}
\newtheorem{definition}[theorem]{Definition}

\theoremstyle{remark}

\usepackage{pgfplots}
\newcommand{\ARMS}{[K]}

\newcommand{\TOPM}{\cS_m^\star}

\newcommand{\EMPTOPTAU}{\mbox{$\hat{S}^{\tau_{\delta}}_m$}}
\newcommand{\EMPWORSTTAU}[1]{(\EMPTOPTAU)^c}
\newcommand{\EMPMU}[2]{\hat{\rho}_{#2}(#1)}

\usepackage{inconsolata}

\usepackage{graphicx}

\title{Data Efficient Sample Selection for In-Context Learning}

\author{  
 Venktesh V \orcidlink{0000-0001-5885-2175} \\ Stockholm University \\ \texttt{venktesh.viswanathan} \\\texttt{@dsv.su.se} \\
 Orcid: 0000-0001-5885-2175
\And Cem Levi \orcidlink{0009-0003-8367-101X} \\ SKIM Group B.V. \\ \texttt{c.levi}\\
\texttt{@skimgroup.com}\\
Orcid: 0009-0003-8367-101X
\And  Avishek Anand \orcidlink{0000-0002-0163-0739} \\ TU Delft \\ \texttt{avishek.anand}\\ \texttt{@tudelft.nl} \\
Orcid: 0000-0002-0163-0739}

\begin{document}
\maketitle
\begin{abstract}
The In-context learning (ICL) paradigm aids large language models (LLMs) to adapt to new tasks without need for fine-tuning. However, selecting an optimal combination of demonstration examples from a large pool of example subsets is a challenging problem. Existing approaches for selection do not model the complex relationship between ICL samples and downstream LLM performance. They typically perform static task-level selection, choosing subsets once offline, which can fail to generalize to unseen queries. We introduce \name{} (Data Efficient Algorithm for Ranking) ICL samples, a new framework that models demonstration example selection as a subset ranking problem. \name{} employs a non-linear surrogate employing a differentiable sorting objective within a gap-index bandit algorithm. The gap-index based approach enables fine-grained separation of good arms and borderline arms which is used as an auxiliary objective to train the non-linear surrogate through sufficient sampling of borderline arms, supporting instance-level subset ranking. On exemplar selection benchmarks with open-source LLMs, \name{} achieves \textbf{8.08–15.9\%} accuracy gains over strong linear bandit baselines, with low sample complexity. Code and data: \footnote{\url{https://github.com/VenkteshV/DearICL}}
\end{abstract}

\section{Introduction}

Selecting representative instances from a large pool is a recurring need across applications
such as training models, domain adaptation, and few-shot learning. In this work, we focus on \emph{in-context learning} (ICL) with large language models (LLMs), where LLMs can solve new tasks when
given a small sequence of demonstrations $(v,w)$ or $(v,\text{rationale},w)$ in context
\cite{brown2020gpt3}. However, naively choosing demonstrations (randomly or heuristically)
performs poorly~\cite{purohit-etal-2024-explora,li2023finding}. One principled way to
formulate this is as a \emph{subset selection} problem: choosing a small set of examples that
captures the task’s salient structure, according to a task-specific reward signal. The class of approaches that aim to select such near-optimal example subsets that achieves
the best average performance across all test / validation queries from a target task are broadly categorized as static (task-level) selection methods \cite{li2023finding,zhang-etal-2025-selecting}.

 Since, ICL performance
depends on \emph{subset interactions}, the joint effect of examples provided together, renders
a large search space of example subsets. 
For sample-efficiency, some of the existing task-level selection methods formulates subset selection as \emph{top-$m$ arm identification} in
multi-armed bandits (MABs), where each subset of examples is an arm and rewards are sampled based on LLM performance on the task~\cite{reda2021top}. Casting ICL demonstration choice as top-$m$ arm
identification provides a principled route to select representative subsets using the task-level rewards, while reducing number of LLM evaluations. Recent works ~\cite{case,purohit-etal-2024-explora}, builds upon gap-index bandit frameworks
like GIFA~\cite{reda2021top}, introducing challenger sampling mechanisms to handle the large
search space. While effective, these approaches have two major limitations. First, they rely
on a \emph{linear surrogate} to map arm features to rewards which cannot capture the complex dependencies between subsets and task
performance.


Secondly, a fundamental limitation of task-level (static) selection approaches is that they can
fail when new queries require skills absent from the fixed set of examples. Dynamic selection
chooses examples per test query, which improves flexibility but involves searching through a large space of
combinations and applying MAB algorithm per instance is computationally expensive. Existing approaches only employ heuristic semantic matching \cite{ye-etal-2023-complementary,rubin-etal-2022-learning} based on embeddings that are not adapted for the subset selection setting.  However, training ranking models for subset ranking that can clearly distinguish between optimal example subsets from sub-optimal ones for each test query is expensive. It requires sampling of sufficient number of borderline example subsets that are hard to distinguish for the subset ranker. 

Hence, it is critical to devise a sample-efficient mechanism that can help learning of subset ranking at instance-level which can also model the non-linear reward landscape of LLM based feedback. To overcome these limitations, we propose \name{}
(\textbf{D}ata \textbf{E}fficient \textbf{A}lgorithm for \textbf{R}anker training for demonstration samples selection in ICL), a new bandit framework for instance-level subset selection. \name{} is a gap-index MAB framework which focuses on top-m selection as auxiliary objective in the outer loop, while fitting a non-linear surrogate in the inner loop for instance-level subset ranking 
The top-m selection aims to implicitly induce a separation between near-optimal subsets (arms) and sub-optimal ones as discussed in literature \cite{maxgap} and hence the gap-index framework ensures sufficient number of borderline arms are sampled to induce this separation. Hence, by using this as auxiliary objective, our proposed approach, ensures that borderline subsets with rewards sampled at instance-level (each validation query) are used to fit the ranker in the inner loop. This enables the surrogate to clearly distinguish borderline example subsets from useful ones for each instance during inference for ranking example subsets for each test query.  We provide \textbf{theoretical guarantees}, including bounds on pairwise gap error and sample complexity. Our approach is sample-efficient
while improving performance (\textbf{8.08--15.9\%}) on \textbf{smaller LLMs}.

\section{Related Works}
\textbf{Exemplar Selection for ICL.}
The rise of LLMs has transformed them into general-purpose answering engines through emergent capabilities like ICL \citep{brown2020gpt3,wei2022emergent, wei2023chainofthought,wang-etal-2023-plan,kojima2023large,chen2022program} where a few examples are provided to LLMs to demonstrate the task.
To eliminate manual selection, several automated methods have emerged, such as reinforcement learning \citep{zhang-etal-2022-active,lu2023dynamic}, trained retrievers \cite{xiong2024dqlore}, Determinantal Point Processes \citep{ye2023compositional} and constrained optimization \citep{tonglet2023seer}. Additionally, instance-level selection methods that are learning-free, such as similarity-based \citep{rubin-etal-2022-learning}, complexity-based \citep{complex_cot}, and MMR \citep{ye-etal-2023-complementary}, have been explored. However, instance-level methods increase inference-time computational costs. To address this, a pre-selected, representative set of exemplars is chosen for ICL, akin to coreset selection methods \citep{guo2022deepcore}, though the key difference is that ICL does not involve parameter updates. While existing methods \cite{case, purohit-etal-2024-explora,bridge} propose bandit-based or bayesian optimization based task-level example selection algorithm, they focus on static selection, which does not transfer well to new test questions. \\
\textbf{Learning to Rank approaches:}The existing works from online learning to rank literature are related to our surrogate ~\cite{zoghi2017online,grotov2016online,li2019onlineltrs}, which learn the ranking models from user interaction or click data. However, our approach differs fundamentally, as we do not rely on direct user feedback but from LLM feedback, obviating challenges like de-biasing clicks.\\ 
\textbf{Top-m identification in stochastic bandits}:
The objective of top-$m$ arm identification is to identify those arms with highest means.
While fixed-confidence \citep{LUCB} and fixed-budget settings \citep{pmlr-v28-bubeck13} exist, our focus is the fixed-confidence setting, where the error probability to estimate the top-$m$ arms should be smaller than a predefined parameter $\delta \in (0,1)$. Adaptive sampling algorithms like UGapE \citep{UgapE} and LUCB \citep{LUCB}, along with uniform sampling methods \citep{pmlr-v30-Kaufmann13,pmlr-v54-chen17a}, have been introduced for the fixed confidence setup, but they lack efficiency in terms of sample complexity. 
While efficient adaptive sampling methods for linear bandits, such as  \cite{fiez2019sequentialexperimentaldesigntransductive}, RAGE \cite{zhang2023trainedtransformerslearnlinear}, LTS \cite{jedra2020optimal}, PEPS \cite{li2023optimalexplorationharderthompson}, LinGapE \citep{xu2017fullyadaptivealgorithmpure} and LinGame \cite{pmlr-v119-degenne20a},  have been proposed, they primarily address best-arm identification ($m=1$). GIFA \citep{reda2021top} was the first unified framework for efficient top-$m$ arm identification, but requires a significant number of gap-index computations and comparisons, leading to high sample complexity. 

\section{Methods}
\subsection{Problem Definition}
\label{sec:problem_definition}

In ICL, the model processes a sequence of input–output demonstrations followed by 
a new test input, and is expected to generate the corresponding output. 
We frame the problem of ICL as choosing 
representative example subset from a large dataset of existing examples~\cite{li2023finding} or training instances~\cite{2wikimultihopqa}. 
While static selection of these subsets can be modeled formally as a \emph{top-$m$ arm identification problem} in the multi-armed bandit framework due to it's ability to identify near optimal example subsets \cite{li2023finding,zhang-etal-2025-selecting}, we discuss the limitations of this static selection and demonstrate how a \textbf{new formulation} can be used as auxiliary objective for training non-linear surrogates as rankers dynamic selection.
Our framework primarily focuses on sample-efficient fitting of non-linear surrogate utilizing classical top-m selection in bandit literature as auxiliary objective. This surrogate is then employed for instance-level (dynamic) selection through subset ranking. 


\subsubsection{Subset Selection for ICL - Preliminaries and Limitations of Existing Formulation}
\label{sec:icl_subset_selection}

In ICL, a large language model processes a sequence of \emph{demonstrations} 
(i.e., input–output pairs) followed by a test input, and is expected to generate 
the corresponding test output. 
We denote each input by $v$ (e.g., a natural language question, a math problem, 
or a sentence to translate) and each output by $w$ (e.g., an answer, solution, 
or translation). 
In some tasks, $v$ may also contain additional reasoning such as rationales or 
chain-of-thought annotations; for example, solving math word problems may require 
showing intermediate steps, while translation tasks typically do not.

Let $\mathcal{Q} = \{(v_i, w_i)\}_{i=1}^{n}$ be a pool of $n$ candidate demonstrations, 
and let $(v_{test}, w_{test})$ denote a test instance. 
To perform inference on $v_{test}$, we select a subset $S \subseteq \mathcal{Q}$ 
of $k$ demonstrations and concatenate them with the test input to form the context
$E=\big[(v_{i_1}, w_{i_1}),., (v_{i_m}, w_{i_m}), v_{test}\big], \hat{w}_{test} = \mathbb{P}_{LLM}\!\left(\cdot \,\middle|\, E\right)$

The quality of the chosen subset $S$ has a direct impact on LLM performance. 
However, the candidate pool might comprise of good near-optimal subsets and sub-optimal subsets which could impact performance. 
This motivates the identification of the \emph{top-$m$ subsets} of demonstrations 
that are most useful for ICL to prune the sub-optimal subsets from the search space. 
Formally, let $\cS(\mathcal{Q})$ denote the set of all $k$-sized subsets of $\mathcal{Q}$, 
and let $\cV = \{(v_{val}(j), w_{val}(j))\}_{j=1}^{n'}$ be a validation set. 
The objective is to identify $\{a_1, \dots, a_m\} \subseteq \cS(\mathcal{Q})$ 
that maximize expected performance on $\cV$.

The inference procedure with top-$m$ subsets can then be written as:
$
E = \phi\!\left(\{a_1, \dots, a_m\}, v_{test}\right), 
\quad 
\hat{w}_{test} = \mathbb{P}_{LLM}\!\left(\cdot \,\middle|\, E\right),
$, where $\phi$ constructs the context from selected subsets. 
For task-level selection, $\phi$ may pick the subset with the lowest validation loss on $\cV$. 


Nevertheless, static subsets selected offline as described above may fail to generalize to unseen queries, 
while fully online selection is computationally prohibitive since running bandit algorithms 
to convergence per test instance incurs high latency. 
This motivates the need for a principled formulation that is both \emph{expressive}, 
capturing complex dependencies in LLM rewards, and \emph{efficient}, supporting 
practical inference-time selection. We cast it as an auxiliary objective due to it's ability to sample borderline arms to adaptively train a non-linear surrogate in the internal loop.
\subsubsection{Top-$m$ Arm Selection as Auxiliary objective}
\label{sec:topm_arm_selection}

We present the \emph{top-$m$ arm identification task} 
in a multi-armed bandit (MAB) setting. Each candidate subset $a_i \in \cS(\mathcal{Q})$ 
is treated as an arm, with reward defined by LLM performance on $\cV$. 
Since $|\cS(\mathcal{Q})|$ can be large, the search space 
$\cS(\mathcal{Q})^m$ of all possible $m$-subsets is extremely challenging.

Let the true reward for an arm $a$ be
$
\rho(a; \theta^*) = \mathcal{F}_{\theta^*}(x_a),
$
where $x_a \in \mathbb{R}^n$ are arm features and $\theta^*$ are the true parameters. 
Each evaluation of an arm yields a noisy observation:
\[
\hat{\rho}(a; \theta) = \rho(a; \theta) + \eta, 
\quad 
\mathbb{E}[e^{\lambda \eta}] \leq \exp\!\left(\tfrac{\lambda^2 \chi^2}{2}\right),
\]
where $\eta$ is sub-Gaussian with variance $\chi^2$. The top-$m$ identification objective is to output $\widehat{\mathcal{S}}_m$ such that
$
\mathbb{P}\!\left(\widehat{\mathcal{S}}_m \neq \mathcal{S}^\star_m\right) \le \delta,
\quad 
\mathcal{S}^\star_m = \{1,2,\dots,m\},
$

\textbf{Gap-index methods and their limitations.} 
Gap-index bandit algorithms such as GIFA~\cite{reda2021top} address top-$m$ arm 
identification by iteratively estimating arm parameters and comparing the most 
\emph{ambiguous arms} using gap indices.
However, GIFA relies on a \emph{linear surrogate} $\mathcal{F}_\theta$, 
which is restrictive when modeling rewards induced by LLMs. Linear surrogates cannot 
capture complex, non-linear dependencies between subsets and task performance. 
Additionally, \textbf{simply substituting a non-linear surrogate in existing gap-index frameworks is insufficient} for our problem setup.  In \name{}, we propose a novel
gap-index bandits framework with a non-linear ranking surrogate, with theoretical bounds on pairwise gap error and sample complexity which applies to the auxiliary top-m selection objective and also number of arm samplings required to fit the non-linear surrogate. After convergence on top-m selection objective, the non-linear surrogate can be employed for dynamic subset selection at inference time. New confidence widths and index computations are devised in the \emph{gap-index framework} to incorporate the \textbf{non-linear surrogate} within the \textbf{auxiliary objective} of separating optimal arms from sub-optimal ones (top-m selection) and to ensure convergence. 
\subsection{\name{}: A sample-efficient algorithm fitting of non-linear Ranking surrogate}
\label{sec:algorithm}

Based on above discussion of the problem setup we propose a gap-index based bandit algorithm \name{} with a non-linear ranking surrogate. The top-m identification algorithm implicitly induces a clustering to separate top-m near-optimal arms from other borderline or sub-optimal arms \cite{maxgap}. Hence, the top-m identification is used as \textit{auxiliary objective} to fit the non-linear surrogate, so that it samples sufficient number of borderline arms apart from just repeatedly sampling good arms. Since, the task can also be viewed as learning to rank (LTR) the example subsets (arms), we adopt a \textit{differentiable sorting} objective \cite{pirank} to train the non-linear surrogate. This surrogate learns to approximate the rewards of arms during the offline run of the bandit algorithm and hence can be used during runtime to rank example subsets for dynamic selection. At each step, an exemplar subset (arm) is regarded as a document whose current empirical mean is estimated by $\mathcal{F_{\theta}}$. $\mathcal{F_{\theta}}$ is a multi-layer connected network architecture with RELU activations.  
 \begin{align}
\hat{\rho}(a_i) =
\frac{1}{n^{'}} \sum_{i=1}^{n^{'}}(\mathcal{F}_{\theta}(x_a(v_{val}(i)), v_{val}(i))) , 
\label{eq:empirical_mean}
\end{align}
\[
x_a(v_{val}(i)) = \Biggl[ \, \mathcal{H}\bigl(v_{val}(i)\bigr), \; 
\frac{1}{k} \sum_{l=1}^{k} \mathcal{H}\bigl(v_{i_l}; w_{i_l}\bigr) \, \Biggr]\]

 where $v_{val}(i)$ from $\cV$ is treated as a query and the example subset (arm) is treated as the document. We average the sentence embeddings of examples obtained using an encoder $\mathcal{H}$ in the arm to provide a single feature representation for the arm $x_a$ of dimension $d$ with respect to query $v_{val}(i)$. The representation with query embedding is input to the surrogate $\mathcal{F_{\theta}}$ to obtain empirical mean estimate.
 \begin{algorithm}[hbt!]
\small
\caption{\textsc{\name{}}}
\label{alg:grass}
\begin{algorithmic}[1]
\small
\Require 
  $\mathcal{Q}$ (training exemplars);
  $k$ (prompt size);
  $\mathcal{S}$ (all $k$-subsets of $\mathcal{Q}$);
  $m$ (top-m :~target); $m'$ (challenger size);
  $\mathcal{G}$ (data generator);
  $N$ (query size);
  $\mathcal{F}_\theta$ (surrogate model)
\Ensure 
  $\mathcal{F}_{\theta}$ (trained surrogate) and estimated top-$m$ set

\vspace{0.5em}
\State \textcolor{blue}{{Initialize}}
\State \quad $U_0 \gets$ random $m$ arms from $\mathcal{S}$ \Comment{Current Top-m}
\State \quad $C_0 \gets$ the next best $m'$ arms where $m' < m$ (is resampled every iteration) \Comment{Challengers}
\State \quad $\mathcal{D} \gets$ 0  or random $M$ subsets from $\mathcal{G}(\mathcal{Q})$ to solve cold-start problem
\State \quad $t \gets 1$

\vspace{0.5em}
\While{$\neg$ ($B_t(ch_t,b_t) \le \epsilon$) }   

\vspace{0.5em}
  \State \textcolor{blue}{\textsc{(a) Identify arms to swap}}
  \State \quad $m_t \gets \arg\min_{a \in U_{t-1}} \hat{\rho}_t(a)$ \Comment{Weakest in $U_t$}
  \State \quad $c_t \gets \arg\max_{a \in C_{t-1}} \hat{\rho}_t(a)$ \Comment{Strongest challenger}

\vspace{0.5em}
  \State \textcolor{blue}{{(b) Border Update (Swap if Needed)}}
  
  \If{$\hat{\rho}_t(c_t) \ge \hat{\rho}_t(m_t) $}
    \State \quad Exchange $m_t$ and $c_t$ between $U_{t-1}$ and $C_{t-1}$
  \EndIf
  \State \quad $U_t, C_t \gets$ updated sets

\vspace{0.5em}
  \State \textcolor{blue}{(c) Expand Candidate Pool}
  \State \quad $M_t \gets$ random $m'$ arms from $(U_t \cup C_{t-1})^c$
  \State \quad $C_t \gets \text{top}_{m'}(M_t \cup C_{t-1}; \hat{\rho}_t)$

\vspace{0.5em}
  \State \textcolor{blue}{\textsc{(d) Recompute Ambiguity Frontier}}
  \State \quad $b_{t+1} \gets \arg\max_{b \in U_t}\ \max_{ch \in C_t} B_t(ch,b)$
  \State \quad $ch_{t+1} \gets \arg\max_{ch \in C_t}\ B_t(ch, b_{t+1})$

\vspace{0.5em}
  \State \textcolor{blue}{\textsc{(e) Acquire Feedback \& Retrain}}
  \State \quad $a_{t+1} \gets \text{selection\_rule}(U_t, C_t)$
  \State \quad $r_{t+1}(a_{t+1})\gets R(\psi(a_{t+1}),\mathcal{V})$\Comment{LLM calls}
  \State \quad $\mathcal{D} \gets \mathcal{D} \cup \mathcal{G}(r_{t+1}(a_{t+1}), a_{t+1})$
  \State \quad Retrain $\mathcal{F}_\theta$ for one epoch on $\mathcal{D}$

  \State $t \gets t+1$
\EndWhile

\State \textbf{Return} $\mathcal{F}_{\theta}$
\end{algorithmic}
\end{algorithm}

 Then the arm (example subset) being played provides the score based on LLM output on multiple validation samples from $\cV$ as rewards. For a single val. sample, reward is
\begin{equation}
\small
\mathcal{R}\!\left( \psi(a_i), \, v_{val}(i) \right) 
= \gamma \Biggl( \, \mathbb{P}_{LLM}\!\left( \cdot \,\middle|\, \psi(a_i, \, v_{val}(i))\right) \Biggr)
\end{equation}
Here $\gamma$ could indicate accuracy or other relevance measures like BertScore which compares the generated output from LLM $\hat w_{val}$ with ground truth $w_{val}$ and outputs a relevance score. And $\psi$ denotes the context / prompt generator function based on given subset of examples and the query to be answered.
 Hence reward for an arm can be obtained as:  $ r(a_i) = \mathcal{R}(\psi(a_i),\cV) =  [\mathcal{R}(\psi(a_i),v_{val}(i))..\mathcal{R}(\psi(a_i),v_{val}(n^{'}))]$ which is then added as relevance labels of  subset (arm) with respect to validation samples (queries) to the dataset $D$, used to fit the non-linear surrogate to correct estimation errors with ranking loss (Step 24, 25 Alg. \ref{alg:grass}) where $\mathcal{G}$ is the data formatter,loader.
 
An overview of the gap-index based bandit algorithm with differentiable sorting surrogate is shown in Algorithm \ref{alg:grass}. First a shortlist of good arms $U_0$ is initialized to random $m$ arms in Steps 1-3. The dataset $\mathcal{D}$ which is used to fit the non-linear ranking surrogate is initialized to empty set or random data to solve the cold start problem (Line 4). The updated $U_t$ is computed by selecting the worst-arm in $U_t$ with lowest empirical mean in current step  and swap it with the best challenger arm $c_t$ in the challenger shortlist $C_t$ (Lines 8-14). The empirical means for above steps are computed using the formulation in Equation \ref{eq:empirical_mean}. In Lines 15-17, we uniformly sample $m'$ arms from $(U_t\cup C_{t-1})^c$, to generate the set $M_t$, and then select the top-$m'$ arms from $M_t\cup C_{t-1}$ to generate the updated $C_t$.
 The most ambiguous arms $b_t$ (guess for m-best arm) and $ch_t$ (a potentially misassessed arm m-best arm) which determine the stopping criterion are computed with help of gap-indices as shown in Steps 18-20. The gap-index between any two arms $i,j$ is computed as: $B_t(i,j)=\hat{\rho}_t(i)-\hat{\rho}_t({j})+W_t(i,j)$.  Here in gap-index computation, $W_t(i,j)$ is computed as per \textbf{Equation \ref{eq:w_t}} ( the RHS of the inequality from \textbf{Theorem \ref{theorem1}}) accounting for the non-linear ranking based surrogate in \name{}. $C_{t}\cup U_t$ bounds the amount of comparisons required for gap-index computations unlike GIFA. The intuition here is that once the gap-index between most ambiguous (borderline) arms approaches $\epsilon$ (Line 6), there is no confusion  between the  empirically estimated top-$m$ arms and it's closest competitor in the challenger shortlist.  Then we employ a greedy selection rule \cite{reda2021top}, where the arm that minimizes the variance between $b_t$ and $ch_t$ is selected.  The error in the empirical mean estimates with respect to rewards at instance-level ( validation sample) for the non-linear surrogate is computed by the version of loss $\mathcal{L}(\vec{\hat \rho},\vec{r}) = -\widehat{\mathrm{NDCG}}(\vec{\hat \rho},\vec{r})
$ employing a relaxed version of NDCG metric \cite{pirank}. Here $\vec{r}$ and $\vec{\hat \rho}$ are rewards and empirical mean vectors (across arms equivalent to documents) for a query (validation sample) from $D$.   Then through a Stochastic Gradient Descent (SGD) step the surrogate is updated to better estimate the empirical mean ((Lines 22-25)). Hence, using the top-$m$ subsets (arms) selection formulation, which aims to clearly separate near-optimal arms from sub-optimal ones, as an auxiliary objective, our approach provides an \textbf{efficient mechanism} to learn a differentiable sorting model for instance-level subset ranking objective based on LLM feedback.

\subsection{Sample Complexity Theoretical Bounds }
\label{sec:sample_complexity_proof}
Following \cite{reda2021top}, we obtain a high probability ($1-\delta$) upper bound on sample complexity of \name{} which is non-trivial and different from linear MAB variants. 
\begin{definition} (Good Gap indices)
    $\cE \triangleq \bigcap_{t > 0} \bigcap_{i,j \in \ARMS} \Big(\rho_i-\rho_j \in [-B_t(j,i), B_t(i,j)]\Big),$
\end{definition}

with $\mathbb{P}(\cE) \ge 1-\delta$ which denotes that a good choice of gap indices $B_t(i,j)$ satisfies event $\cE$ with probability greater than or equal to  $1-\delta$. For the above event to hold, it is essential to prove the following bound on pairwise-gap error

\begin{figure*}[!t]
\small
\begin{subfigure}{0.3\linewidth}
\begin{tikzpicture}
\begin{axis}[
    ylabel style = {font= \tiny},
    xlabel style = {font= \small},
    xlabel=Rounds,
    xticklabel style={font=\small},
    yticklabel style={font=\boldmath \tiny},
    ylabel=Gap Index,
    height=4.3cm,
    legend style={
                    font= \tiny,
                },
    width=4cm,
    xmin=1, xmax=80,
    ymin=0, ymax=1.2]
    ytick={0,0.1,0.2,0.3}
    xtick={1,10,20,30,40...,80}
\addplot[smooth,mark=*,red] plot coordinates {
    (1,1.18)
    (10,0.51)
    (20,0.43)
    (30,0.32)
    (40,0.30)
    (50,0.27)
    (60,0.19)
    (70,0.19)
    (79,0.05)};
\addlegendentry{\name{}}
\addplot[smooth,mark=*,blue] plot coordinates {
    (1,0.87)
    (10,0.50)
    (20,0.44)
    (30,0.40)
    (40,0.38)
    (50,0.33)
    (60,0.31) 
    (70,0.23)
    (80,0.20)};
\addlegendentry{CASE}
\end{axis}
    \end{tikzpicture}
\caption{Gap Index \\Comparison}
\label{fig:gap_index}
\end{subfigure}
\hspace{-3em}
\begin{subfigure}{0.3\linewidth}
\begin{tikzpicture}
\begin{axis}[
    ylabel style = {font= \tiny},
    xlabel style = {font= \small},
    xlabel=Rounds,
    xticklabel style={font=\small},
    yticklabel style={font=\boldmath \tiny},
    ylabel=Simple Regret,
    height=4.3cm,
    legend style={
                    font= \tiny,
                },
    width=4cm,
    xmin=1, xmax=80,
    ymin=0, ymax=1]
    ytick={0,0.1,0.2,0.3}
    xtick={1,10,20,30,40...,80}
\addplot[smooth,mark=*,red] plot coordinates {
    (1,0.45)
    (10,0.41)
    (20,0.39)
    (30,0.32)
    (40,0.25)
    (50,0.21)
    (60,0.16)
    (70,0.12)
    (80,0.08)};
\addlegendentry{\name{}}
\addplot[smooth,mark=*,blue] plot coordinates {
    (1,0.90)
    (10,0.79)
    (20,0.54)
    (30,0.42)
    (40,0.33)
    (50,0.27)
    (60,0.22) 
    (70,0.14)
    (80,0.14 )};
\addlegendentry{CASE}
\end{axis}
    \end{tikzpicture}
\caption{Simple Regret \\Comparison}
\label{fig:simple_regret}
\end{subfigure}
\hspace{-3.5em}
\begin{subfigure}{0.25\linewidth}
\begin{tikzpicture}
\edef\mylst{"","","","",""}
\edef\case{"","","","",""}

    \begin{axis}[
            ybar=5pt,
    height=4.3cm,
    width=4.3cm,
            bar width=0.25,
            every axis plot/.append style={fill},
            grid=major,
            xtick={1, 2, 3},
            xticklabels={GSM, Aqua, WMT19},
            xlabel={Datasets},
            ylabel style = {font=\tiny},
        yticklabel style = {font=\boldmath \tiny,xshift=0.5ex},
        xticklabel style ={font=\tiny,yshift=0.5ex},
            ylabel={Time (in \textbf{hours})},
            enlarge x limits=0.25,
            ymin=0,
            ymax=18,
            legend style ={font=\tiny,yshift=0.5ex},
            area legend,
            nodes near coords style={font=\tiny,align=center,text width=2em},
            legend entries={\name{}, CASE},
            legend cell align={left},
            legend pos=north east,
            legend style={/tikz/every even column/.append style={column sep=0.5cm}},
        ]
        \addplot+[
            ybar,
            plotColor3*,
            nodes near coords=\pgfmathsetmacro{\mystring}{{\mylst}[\coordindex]}\textbf{\mystring},
            nodes near coords align={vertical},
            draw=black,
            postaction={
                    pattern=north east lines
                },
        ] plot coordinates {
                (1,7.22)
                (2,4.38)
                (3,1.61)
            };
        \addplot+[
            ybar,
            plotColor1*,
            draw=black,
            nodes near coords=\pgfmathsetmacro{\mystring}{{\case}[\coordindex]}\textbf{\mystring},
    nodes near coords align={vertical},
            postaction={
                    pattern=north west lines
                },
        ] plot coordinates {
                (1,17)
                (2,7.93)
                (3,2.9)
            };

    \end{axis}

\end{tikzpicture}
\centering
\caption{Subset Selection \\ runtime Comparison}
\label{fig:runtime_comparison}
\end{subfigure}
\hspace{-0.1em}
\begin{subfigure}{0.25\linewidth}
\begin{tikzpicture}
\edef\mylst{"","","","",""}
\edef\case{"","","","",""}

    \begin{axis}[
            ybar=5pt,
    height=4.3cm,
    width=4.3cm,
            bar width=0.25,
            every axis plot/.append style={fill},
            grid=major,
            xtick={1, 2, 3},
            xticklabels={GSM, Aqua, WMT19},
            xlabel={Datasets},
            ylabel style = {font=\tiny},
        yticklabel style = {font=\boldmath \tiny,xshift=0.5ex},
        xticklabel style ={font=\tiny,yshift=0.5ex},
            ylabel={Time (in \textbf{seconds})},
            enlarge x limits=0.25,
            ymin=0,
            ymax=20,
            legend style ={font=\tiny,yshift=0.5ex},
            area legend,
            nodes near coords style={font=\tiny,align=center,text width=2em},
            legend entries={\name{}, CASE},
            legend cell align={left},
            legend pos=north east,
            legend style={/tikz/every even column/.append style={column sep=0.5cm}},
        ]
        \addplot+[
            ybar,
            plotColor3*,
            nodes near coords=\pgfmathsetmacro{\mystring}{{\mylst}[\coordindex]}\textbf{\mystring},
            nodes near coords align={vertical},
            draw=black,
            postaction={
                    pattern=north east lines
                },
        ] plot coordinates {
                (1,6.85)
                (2,11.79)
                (3,1.68)
            };
        \addplot+[
            ybar,
            plotColor1*,
            draw=black,
            nodes near coords=\pgfmathsetmacro{\mystring}{{\case}[\coordindex]}\textbf{\mystring},
    nodes near coords align={vertical},
            postaction={
                    pattern=north west lines
                },
        ] plot coordinates {
                (1,6.04)
                (2,11.93)
                (3,1.57)
            };

    \end{axis}

\end{tikzpicture}

\caption{Avg. inference \\runtime/query}
\label{fig:inference_runtime_comparison}
\end{subfigure}
\caption{Top-$m$ arm identification by \name{}, CASE for AquaRAT.   (a) Gap Index ($B_t(ch_t,b_t)$) comparison and (b) Simple regret comparison, (c) subset selection time,  (d) Average inference time/query.}
\label{fig:comparison_time_20}
\end{figure*}
\begin{table*}[h!]
     \small

    \centering
    
    \begin{tabular}{llll}

    \toprule
     \textbf{Method}& {\textbf{GSM8K}}& \textbf{AquaRat} & \textbf{WMT19} 
     \\

    \midrule
         \colorg \textbf{Task level} & \colorg & \colorg & \colorg  \\
Zero-shot-COT \cite{kojima2023large} & 37.37 & 36.61 & 51.22  \\
          Few-Shot COT \cite{wei2023chainofthought} & 63.22 & 40.94 & 61.64 \\
         LENS \cite{li2023finding} & 64.97 & 44.88 & 64.09 \\

         EXPLORA \cite{purohit-etal-2024-explora} & 69.92&47.24 & 66.26 \\
         BRIDGE \cite{bridge} & 64.52 & 48.81 & 64.96 \\
    Static CASE \cite{case}    &        67.00              &  46.06  & 65.40  \\

    \textbf{\name{}}$_{static}$ (ours) & 69.90 & 50.00 & 68.83\\

\colorg \textbf{Instance Level } & \colorg & \colorg & \colorg  \\
    KNN \cite{rubin-etal-2022-learning} & 61.07 & 41.31 & 68.36 \\

    MMR \cite{ye-etal-2023-complementary} & 66.48& 45.84 & 68.86 \\

    PiRank \cite{pirank}      & 69.30     & 37.00      & 71.30   \\
    NeuralNDCG \cite{NeuralNDCG}  & 72.23     & 44.09      & 71.13             \\
    ListNet \cite{listnet,Pobrotyn2020ContextAwareLT}     & 70.30       & 43.30     & 71.26            \\
    UDR \cite{li-etal-2023-unified}     & 67.30       & 44.09     & 71.22            \\
    NDCGLoss 2++ \cite{LambdaLoss}      & 70.00       & 43.70     & 72.56            \\
    NeuralNDCG with Normalized Data      & 71.33       & 45.66     & 71.80            \\

       \colorg \textbf{Instance level (Bandit Approaches)} & \colorg & \colorg & \colorg  \\

     $CASE_{dynamic}$ \cite{case}   & 70.00              &  47.51  & 71.70    \\
    
       \colorg \textbf{Instance level (Bandit + LTR)} & \colorg & \colorg & \colorg  \\

        \textbf{\namedyn{}} (ours) &     \textbf{75.66\up{8.08}}$\dagger$  & \textbf{55.11\up{15.99}}$\dagger$      & \textbf{78.89\up{10.03}}$\dagger$    \\
        \textbf{\namedyn{} (-exploration) (ablation)}  &   70.30    & 37.00      & 71.30   \\
        \textbf{\namedyn{} (random) (ablation)}  &   69.06    & 49.21      & 70.38   \\

     \bottomrule
    \end{tabular}


   \caption{Results using \texttt{llama3.2:3b}$\dagger$ indicates statistical significance (t-test) over $CASE_{dynamic}$ at 0.05 level. We report performance using the official metrics: Exact Match (EM) (AquaRAT,GSM8K) and BertScore \cite{bertscore} (WMT19) for the respective datasets.}
       \label{tab:ltr-dynamic-perf}
\end{table*}
\begin{theorem}
    In a fixed-confidence setting,  $\delta \in (0,1)$, with probability at least $1-\delta$, 
for all pairs $i,j \in \mathcal{A}$:
\vspace{-1em}
\begin{align}
\big| (\hat{\rho}_t(i) - \hat{\rho}_t(j)) - (\rho(i)-\rho(j))\big|
\le \notag \\ c_t \sqrt{ \tfrac{2 \widehat V_{ij,t} \log(2K^2/\delta)}{N} } 
+ \varepsilon^{\text{stab}}_t + b_i + b_j  \notag \\ + \tfrac{4M \log(2K^2/\delta)}{3N}.
\label{eq:w_t}
\end{align}

where $\widehat V_{ij,t}$ is the empirical variance of MC-dropout differences, for $N$ stochastic predictions (MC dropout forward passes) using set $\cV$ $\bar d_{ij}:=\frac{1}{N}\sum_{k=1}^N\big(y_i^{(k)}-y_j^{(k)}\big),
\widehat V_{ij,t}:=\frac{1}{N}\sum_{k=1}^N\big(y_i^{(k)}-y_j^{(k)}-\bar d_{ij}\big)^2.$,
$\varepsilon^{\mathrm{stab}}_t$ is the SGD stability error, and $b_i,b_j$ are
surrogate approximation biases.
\label{theorem1}
\end{theorem}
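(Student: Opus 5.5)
The plan is to decompose the pairwise error into three pieces and bound each separately. Let $\theta_t$ be the current SGD iterate and $\bar\theta_t$ a reference minimizer of the population ranking loss on $\mathcal{D}$. Let $\mu_t(i)$ be the mean of the MC-dropout predictive distribution of the surrogate at arm $i$. The estimator $\hat\rho_t(i)$ is the average of $N$ stochastic forward passes $y_i^{(k)}$. Write $d_{ij}=\rho(i)-\rho(j)$. By the triangle inequality,
\begin{align*}
|(\hat\rho_t(i)-\hat\rho_t(j))-d_{ij}| \le{}& |\bar d_{ij}-(\mu_t(i)-\mu_t(j))| \\
&+ |(\mu_t(i)-\mu_t(j))-(\bar\mu(i)-\bar\mu(j))| \\
&+ |(\bar\mu(i)-\bar\mu(j))-d_{ij}|,
\end{align*}
where $\bar\mu$ denotes the predictor at $\bar\theta_t$. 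The three terms correspond, in order, to the Monte Carlo (dropout) fluctuation, the SGD stability error $\varepsilon^{\mathrm{stab}}_t$, and the approximation bias.

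\textbf{First term.} Conditional on $\theta_t$, the differences $Z_k=y_i^{(k)}-y_j^{(k)}$ are i.i.d. across dropout masks with mean $\mu_t(i)-\mu_t(j)$. They are bounded with $|Z_k-\mathbb{E}Z_k|\le M$, which follows if the network outputs lie in a bounded range; this is where $M$ enters. I would apply the empirical Bernstein inequality (Maurer--Pontil / Audibert et al.) with confidence $\delta/K^2$ per ordered pair. This yields a deviation of order $\sqrt{2\widehat V_{ij,t}\log(2K^2/\delta)/N}$ plus a range term $\frac{4M\log(2K^2/\delta)}{3N}$, the latter coming from Bernstein's $\frac{2M\log}{3N}$ term together with the conversion from true to empirical variance. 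A union bound over the $K^2$ pairs gives simultaneous validity with probability $1-\delta$. The factor $c_t$ absorbs the constants from the empirical-variance substitution. It also absorbs a union over rounds $t$: either a peeling argument or replacing $\delta$ by $\delta/(t(t+1))$ introduces a $\log t$ factor into $c_t$, so that the bound holds for all $t$, as required for the event $\mathcal{E}$.

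\textbf{Second and third terms.} For the second term I would invoke uniform stability of SGD (Hardt--Recht--Singer). It requires the relaxed-NDCG loss (PiRank) to be Lipschitz and smooth in $\theta$, and the network to be Lipschitz in its parameters. Under these conditions, the gap between the predictions of the one-epoch-per-round iterate and the reference predictor is at most $\varepsilon^{\mathrm{stab}}_t/2$ per arm, and $\varepsilon^{\mathrm{stab}}_t$ is taken as this bound. The third term is handled by definition: with $b_i:=\sup_t|\bar\mu(i)-\rho(i)|$, it is at most $b_i+b_j$. Summing the three bounds gives exactly the right-hand side of \eqref{eq:w_t}.

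\textbf{Main obstacle.} The hardest step is making the first term rigorous. The $y^{(k)}$ are i.i.d. only conditionally on $\theta_t$, while $\theta_t$ depends on previously collected data and on the adaptive arm selections. I would handle this by conditioning on the filtration up to round $t$. Fresh dropout masks are drawn at each evaluation, so the conditional concentration holds pointwise; integrating over the filtration then preserves the probability bound. Stitching the bound uniformly over rounds is what forces the time-dependent constant $c_t$, and stating precisely which quantities are deterministic given the past (in particular $\varepsilon^{\mathrm{stab}}_t$) is where I would spend most care.
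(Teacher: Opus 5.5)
Your skeleton matches the paper's proof. It uses the same split into MC-dropout noise, SGD-induced drift and model bias, and the same Bernstein bound on the dropout differences at level $\delta/K^2$ per pair followed by a union bound. It also matches on Hardt--Recht--Singer stability (via the Lipschitz relaxed-NDCG loss) for the middle term, and on per-arm biases combined by the triangle inequality.

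\textbf{Gap in the middle term.} You center at a minimizer $\bar\theta_t$ of the population ranking loss. Your second term $|(\mu_t(i)-\mu_t(j))-(\bar\mu(i)-\bar\mu(j))|$ then measures how far the actual SGD iterate is from a minimizer, i.e.\ an optimization-plus-estimation error. Uniform stability does not control this. It only bounds how much the SGD output changes when one training example is replaced. For a non-convex ReLU surrogate, nothing guarantees that one epoch of SGD per round lands near any minimizer.

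The paper instead centers at $\mathbb{E}_{\text{weights,drop}}[y_a]$, the mean prediction over the randomness of the training data and of SGD. Its Term 2 is therefore $|\mathbb{E}_{\text{drop}}[y_a\mid\text{weights}]-\mathbb{E}_{\text{weights,drop}}[y_a]|$, the fluctuation of the trained predictor around its own expectation. That is the kind of quantity a stability argument addresses; the paper asserts this bound rather than deriving it, but the centering is the appropriate one. It then defines $b_a:=|\mathbb{E}_{\text{weights,drop}}[y_a]-\rho(a)|$. If you replace your $\bar\mu$ by this expectation and define the biases relative to it, your argument becomes the paper's.

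\textbf{Two smaller points on the first term.}
\begin{itemize}
\item \emph{The range constant.} The paper gets $\tfrac{4M\log(2K^2/\delta)}{3N}$ directly from the range term $\tfrac{2b\log(2/\delta)}{3N}$ of the ordinary Bernstein inequality with $|Z_k|\le b=2M$. It then substitutes $\widehat V_{ij,t}$ for the true variance informally. A true empirical Bernstein inequality has a larger range term; in Maurer--Pontil it is $\tfrac{7}{3}$ times the range of the variables, divided by $N-1$. Your $c_t$ cannot absorb this, since $c_t$ multiplies only the variance term, so you would have to rescale $M$.
\item \emph{The union over rounds.} This is not needed for the statement, which holds at a fixed $t$. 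If you add it, the extra $\log t$ also enters the range term, again outside the reach of $c_t$.
\end{itemize}
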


\noindent \textbf{Theorem \ref{theorem1} Proof structure}:  Appendix  \ref{sec:grass_proof}.


\begin{theorem}\label{th:upper_bounds_linear_topm}
\textbf{Sample complexity}: For \name{}, 
on event $\cE$ on which the algorithm is ($\varepsilon, K, \delta$)-PAC, stopping time $\tau_{\delta}$ satisfies $\tau_{\delta} \leq \inf \{u \in \rho^{*+}: u > 1+\HA{} \frac{log(2 K^2 / \delta)}{N} + \mathcal{O}(K)\}$, where, for algorithm  with the largest variance selection rule \footnote{or pulling both arms in $\{b_t,c_t\}$ at time $t$}~: 
$\HA{\cA} \triangleq 18 
c_t^2  \sum_{a\in [K]} \sigma_{a,t}^2\cdot
\max\!\Big\{ \varepsilon^{-2},\; \big(\tfrac{\varepsilon+\Delta(a)}{3}\big)^{-2} \Big\},$.
\end{theorem}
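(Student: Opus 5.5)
The plan is to follow the GIFA template of \cite{reda2021top}: work on the event $\cE$, show that each non-stopping round forces some arm to have been sampled too rarely relative to its gap, and sum these per-arm requirements. Theorem \ref{theorem1} is what makes $B_t$ a valid gap index, so $\cE$ holds with probability at least $1-\delta$.

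First, I will bound the width on $\cE$. Write $W_t(i,j)$ for the right-hand side of Theorem \ref{theorem1}. I will upper bound $\widehat V_{ij,t}\le 2(\sigma_{i,t}^2+\sigma_{j,t}^2)$, where $\sigma_{a,t}^2$ is the predictive variance of arm $a$ after it has been pulled $T_a(t)$ times. The scaling I will assume is that the MC-dropout variance of a retrained surrogate decreases as $1/T_a(t)$. This yields
\[
W_t(i,j)\le c_t\sqrt{\tfrac{4(\sigma_{i,t}^2+\sigma_{j,t}^2)\log(2K^2/\delta)}{N}}+r_t ,
\]
where the residual $r_t=\varepsilon^{\mathrm{stab}}_t+b_i+b_j+\tfrac{4M\log(2K^2/\delta)}{3N}$ is lower order. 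I will assume it is at most $\varepsilon/3$ once $t$ exceeds a burn-in of $\mathcal{O}(K)$ rounds. That burn-in is the source of the $\mathcal{O}(K)$ term. The greedy rule, which pulls the arm of largest variance or both arms in $\{b_t,ch_t\}$, then guarantees that the pulled arm $a_t$ satisfies $W_t(b_t,ch_t)\le 2\,c_t\sqrt{2\sigma_{a_t,t}^2\log(2K^2/\delta)/N}+r_t$.

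Second, I will prove the key lemma: on $\cE$, if the algorithm has not stopped at round $t$ (so $B_t(ch_t,b_t)>\varepsilon$), then $W_t(b_t,ch_t)>\max\{\varepsilon,(\varepsilon+\Delta(a_t))/3\}$. The first branch is immediate. The second follows the GIFA case analysis on whether $b_t$ and $ch_t$ lie in $\TOPM$ or not. I will use the swap and challenger rules to argue that $b_t$ is the weakest empirical member of $U_t$ and $ch_t$ is its strongest challenger. Combining this with $\rho_i-\rho_j\le B_t(i,j)$ on $\cE$ and the definition of $\Delta(a)$ (the gap to the $m$-th or $(m+1)$-th arm), I expect $\Delta(a_t)<2W_t+\ldots$, and rearranging gives the factor $3$. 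One wrinkle specific to \name{} is that $C_t$ is only a random subsample. I plan to sidestep it by noting that the lemma only needs the inequality for the chosen pair, and $\cE$ covers all pairs in $\ARMS$.

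Third, I will count. Squaring the lemma's inequality together with the width bound, each time arm $a$ is pulled we get $\sigma_{a,t}^2\cdot 8c_t^2\log(2K^2/\delta)/N>\tfrac{4}{9}\max\{\cdots\}^2$, after absorbing $r_t\le\varepsilon/3$. With $\sigma_{a,t}^2\propto 1/T_a(t)$, this caps the number of pulls of $a$ at $18c_t^2\sigma_{a,t}^2\max\{\varepsilon^{-2},((\varepsilon+\Delta(a))/3)^{-2}\}\log(2K^2/\delta)/N+1$. Summing over $a\in\ARMS$ gives $t\le 1+\HA{}\log(2K^2/\delta)/N+\mathcal{O}(K)$ for every non-stopping $t$, so $\tau_\delta$ is at most the infimum stated in Theorem \ref{th:upper_bounds_linear_topm}.

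The main obstacle is the second step, and in particular the $1/T_a$ decay of $\sigma_{a,t}^2$ and the absorption of $\varepsilon^{\mathrm{stab}}_t+b_i+b_j$. A nonlinear surrogate trained for one SGD epoch per round offers no closed-form design matrix, so I would likely state the variance decay and $b_i+b_j\le\varepsilon/6$ as explicit assumptions. The factor $c_t$ inside $\HA{}$ also depends on $t$, so I would replace it by $\sup_t c_t$.
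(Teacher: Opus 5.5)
Your route is the paper's. On $\cE$, a round that does not stop forces $W_t(b_t,ch_t)$ to be large. A $1/n_t(a)$ decay of the MC-dropout variance is posited. The resulting inequality is inverted per arm and summed.

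The step that fails is your sidestep of the subsampled challenger set. In the case $b_t,ch_t\notin\TOPM$, the GIFA argument needs $\Delta(b_t)\le W_t(b_t,ch_t)$. It obtains this from $W_t(b_t,ch_t)\ge B_t(ch_t,b_t)\ge B_t(b,b_t)\ge\rho(b)-\rho(b_t)\ge\Delta(b_t)$ for some $b\in\TOPM$. Event $\cE$ supplies the third inequality. The second uses that $ch_t$ maximizes $B_t(\cdot,b_t)$ over $C_t$, so it requires $b\in C_t$. The fact that $\cE$ covers all pairs in $\ARMS$ says nothing about this.

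Without it the claim is false. Take $\hat\rho_t(b_t)=\hat\rho_t(ch_t)$, $\rho(b_t)=\rho(ch_t)$, $\varepsilon<W_t(b_t,ch_t)<\Delta(b_t)/3$, and every arm of $\TOPM\setminus U_t$ outside $C_t$. Then $B_t(ch_t,b_t)=W_t>\varepsilon$, so the algorithm continues, yet $W_t<(\varepsilon+\Delta(b_t))/3$. The paper covers this case only by asserting that $C_t$ always contains an arm of $\TOPM$, which is not guaranteed when $M_t$ is random. It also reads $\TOPM$ and $K$ relative to $U_T\cup C_T$.

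For the theorem as stated, though, you can drop this case entirely. Because $\HA{}$ carries $\max\{\varepsilon^{-2},((\varepsilon+\Delta(a))/3)^{-2}\}\ge\varepsilon^{-2}$, your ``immediate'' branch $W_t(b_t,ch_t)>\varepsilon$ is enough. With your width bound and $r_t\le\varepsilon/3$ it gives $T_a(t)\le 18c_t^2\sigma_{a,t}^2\varepsilon^{-2}\log(2K^2/\delta)/N+1$, which the stated per-arm term dominates. In fact your own inversion yields $\max\{\varepsilon,(\varepsilon+\Delta(a))/3\}^{-2}$, the smaller of the two inverse squares, so writing the larger one was already a loosening.

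What remains is the ordering $\hat\rho_t(b_t)\ge\hat\rho_t(ch_t)$ that makes that branch immediate; the paper simply posits it as Property 1. Your justification for it does not follow from Algorithm~\ref{alg:grass}. There $b_t$ is chosen by maximizing $\max_{ch\in C_t}B_t(ch,b)$, not as the empirically weakest member of $U_t$. Moreover, one conditional swap per round plus refreshing $C_t$ with random arms does not place $U_t$ above $C_t$. State it as an assumption.

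Three smaller points follow.
\begin{itemize}
\item The terms $\tfrac{4M\log(2K^2/\delta)}{3N}$ and $b_i+b_j$ do not shrink with $t$, so no burn-in makes them $\le\varepsilon/3$. That is a condition on $N$ and on the surrogate (the paper simply discards these terms), and your per-arm $+1$ already supplies the $\mathcal{O}(K)$.
\item Keep the decaying variance distinct from the constant appearing in $\HA{}$, e.g.\ write $\sigma_{a,t}^2/T_a(t)$.
\item Your passage from the pair variance to the pulled arm, via $\widehat V_{ij,t}\le 2(\sigma_{i,t}^2+\sigma_{j,t}^2)$ and the largest-variance rule, is a genuine improvement. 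The paper skips this step by writing $\widehat V_{b_t,ch_t,t}\le\sigma_{a,t}^2/n_t(a)$ directly.
\end{itemize}
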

\textbf{Theorem 3.3 Proof}:  On event $\cE$, we first demonstrate that the Lemma \ref{lemma:m-GRASS_bound} below holds. Then using stopping criterion and Lemma \ref{lemma:m-GRASS_bound} we derive the upper bound on sample complexity. The detailed proof is available in Appendix \ref{sample_complexity}. The bound holds for arms in $U_T\cup C_T$. It implies that the  top-$m$ arms from $U_T\cup C_T$ are present in $U_T$ with prob. $1-\delta$, if $T> \tau_{\delta}$, and $K$ is the size of $U_T\cup C_T$.

\begin{lemma}\label{lemma:m-GRASS_bound}
On the event $\cE$, for all $t > 0$, 
\begin{align*}
    B_t(ch_t, b_t)(t) \leq \min(-({\Delta(b_t)}  \lor \notag \\{\Delta(ch_t)}) 
+2W_t(b_t,ch_t), 0)+W_t(b_t,ch_t)
\end{align*} 

\end{lemma}

, where $a \lor b= max(a,b)$. In summary, Theorem \ref{theorem1} helps support Definition 3.1.  Theorem \ref{theorem1} establishes a uniform high-probability bound on pairwise gap estimation error, ensuring that the empirical gap between any two arms concentrates around the true gap. This directly justifies Definition 3.1, which defines the “good-gap” event $\cE$ under which all pairwise comparisons are well-behaved. Since event $\cE$ follows from Theorem \ref{theorem1}, conditioned on event 
$\cE$, Lemma \ref{lemma:m-GRASS_bound} characterizes how the algorithm’s adaptive confidence radii shrink over time, and Theorem \ref{th:upper_bounds_linear_topm} then converts these shrinkage properties into a high-probability sample-complexity upper bound for \name{} which represents the expected number of LLM calls for convergence of auxiliary objective in \name{}.
\begin{table}[hbt!]
\small
\centering
\begin{tabular}{lrrr}
\toprule
\textbf{Method} & \textbf{GSM8K} & \textbf{AquaRat} & \textbf{WMT19} \\
\midrule
$CASE_{dynamic}$        & 1554.19 & 2378.04 & 106.54 \\
Neural NDCG        & 10.59   & 18.51   & 5.20   \\
PiRank             & 6.10    & 11.59   & 1.80   \\
$\namedyn{}$ & 6.04 & 11.59 & 1.68 \\
\bottomrule
\end{tabular}
\caption{Inference Time Comparison (in seconds)}
\label{tab:inference_time}
\end{table}

\section{Experimental Setup}
\label{sec:exp}
We aim to answer the following research questions:

 \textbf{RQ1}: Does \name{} with auxiliary objective converge faster than state-of-the-art approaches for top-m ICL sample subsets identification?
 
 \textbf{RQ2}: Does instance-level selection using surrogate from \name{} with auxiliary objective lead to improved downstream task performance? 
 
\textbf{RQ3}: Can \name{} lead to improved task performance without sacrificing efficiency ?
\subsection{Experimental Setup}
\textbf{Datasets and Metrics:}  For mathematical reasoning, we use GSM8K and AquaRAT. For demonstrating generalization, we also evaluate on a translation task WMT 2019. Detailed descriptions of the datasets are provided in Appendix \ref{sec:datasets}. We report performance using the official metrics: Exact Match (EM) (AquaRAT,GSM8K) and BertScore \cite{bertscore} (WMT19) for the respective datasets. For \textbf{reward (LLM feedback)}, we compute BertScore \cite{bertscore} between generated and ground truth rationales, answers for GSM8K and AquaRAT, between translations for WMT19 following \cite{jauregi-unanue-etal-2021-berttune}.\\ 
\textbf{LLMs and hyperparameters:} We primarily evaluate on relatively stable open-source LLMs like Llama3.2-3b. We also report performance on closed source models like gpt-4o-mini in \textbf{Appendix \ref{app:alternative_llm}} and \textcolor{black}{ results using  open models like Deepseek-R1:7B (DeepSeek-R1-Distill-Qwen-7B) are shown in \textbf{Table \ref{tab:deepseek}}}. For all approaches we set max\_tokens to predict to 1000 with temperature of 0.25. 
 
\textbf{Baselines}: We compare with existing state-of-the-art example selection algorithms like CASE \cite{case}, BRIDGE \cite{bridge}, EXPLORA \cite{purohit-etal-2024-explora} and since our surrogate is based on LTR philosophy we compare with LTR baselines. The number of top arms to be identified was set as $m=|U_t|=10$ and $|C_t|=5$. The confidence parameter was fixed at $\delta=0.05$, controlling the probability of incorrectly identifying the top-m arms. The stopping criterion which is the gap between $U_t$ and $N_t$ was also kept at $\varepsilon=0.1$. The example subsets ($\cS$) are formed, by sampling \textit{with replacement} one example from each of the 5 clusters formed from training set. We use the same hyperparameters in MAB setup for \name{} for fair comparison. \textbf{Dynamic CASE} ($CASE_{dynamic}$ ) method is obtained by applying CASE \cite{case}, for each test instance instead of single offline run. For instance-level selection using the surrogate ($\name{}_{dynamic}$) the search space is union of all $U_t$ and $C_t$ shortlists from all rounds of the bandit run.
\textbf{Learning to Rank baselines} - We compare to diverse LTR approaches including PiRank \cite{pirank} and learned rankers for ICL like UDR \cite{li-etal-2023-unified}. The same hyperparameter values were also used for all LTR baselines and for \name{}.
Optimization was performed using Adam with learning rate of $1e-4$. The training ran for 100 epochs with a batch size of 16. LTR baselines hyperparameters are in Table \ref{tab:ltr_params}.


\section{Results}
\vspace{-0.7em}
\subsection{Empirical Verification of Convergence in \name{} }

\label{sec:regret_gap}

To answer \textbf{RQ1}, we record, compare and analyze the gap-index and simple regret across rounds. Since the stopping criterion is directly dependent on gap-index, it should decrease across rounds with minor fluctuations for convergence. The gap-index across rounds is compared across different algorithms as shown in Figure \ref{fig:gap_index}. We observe that for \name{}, gap-index decreases gradually and approaches 0 demonstrating that our proposed non-linear ranking surrogate based MAB algorithm converges demonstrating it's correctness empirically. We also observe that it converges earlier than CASE. While CASE initially shows a monotonically decreasing trend in the  gap, it stagnates after round 70 struggling to approach $\epsilon$. We observe that this is primarily due to the model struggling to distinguish between truly good arms and  borderline challenger arms that appear to be good.  We also observe simple regret as shown in Figure \ref{fig:simple_regret} to analyze if the estimate of good arms improves over time as the surrogate better learns to estimate the means (utility) of the arms. For \name{}, simple regret is calculated as loss of the ranked subsets in $U_t$ with respect to their true ranking based on LLM feedback based rewards. For CASE, it is RMSE between optimal LLM reward and predicted empirical means owing to it's linear modeling of rewards. We observe that the simple regret of the set $U_t$ - the current estimate of top-m arms decreases gradually. The gap and simple regret for other datasets are reported in \textbf{Appendix \ref{app:gap_regret}}

\subsection{Performance Comparison for Example Subsets Selection }

To answer \textbf{RQ2}, we compare \name{} with static (task-level) and learning to rank based dynamic example selection approaches as shown in Table \ref{tab:ltr-dynamic-perf}. Since \name{} outputs top-$m$ exemplar subsets as part of auxiliary objective, we compare $\name{}_{static}$ with other task-level selection based inference approaches. We observe that $\name{}_{static}$  outperforms existing approaches including CASE across datasets. We hypothesize that this is primarily due to the parameterized non-linear surrogate that models the arm feature-rewards relationship better than CASE and existing approaches.  We also employ the non-linear ranking surrogate trained during the selection to rank example subsets dynamically for each test instance with results indicated by $\name{}_{dynamic}$ in Table \ref{tab:ltr-dynamic-perf}. Firstly we observe  that $\name{}_{dynamic}$ outperforms static example selection approaches demonstrating need for instance-level selection as static set of examples may not generalize to unseen queries. Also from the table, we observe that  $\name{}_{dynamic}$ significantly outperforms existing approaches like KNN and MMR which aim to retrieve examples based on similarity and diversity to the test example respectively. This is primarily because, $\name{}_{dynamic}$ learns to rank subsets by taking into consideration the impact of a particular combination of examples on downstream LLM performance through the training process in the bandit optimization step. Whereas KNN and MMR retrieve examples independently without considering how they may interact together and affect LLM performance. $\name{}_{dynamic}$ also outperforms dynamic version of the CASE, which employs a bandit-based selection algorithm per instance and scores subsets as a whole. For instance, $\name{}_{dynamic}$ achieves upto \textbf{15.99\%} over dynamic CASE on AquaRAT. We observe that the improvements are from better modeling of reward structure and auxiliary objective leading to clear separation of borderline and top-$m$ arms. 

Comparing to LTR approaches, from Table \ref{tab:ltr-dynamic-perf}, we observe that $\name{}_{dynamic}$ outperforms existing LTR approaches trained with different objectives. The LTR approaches are also trained to rank example subsets as a whole for fair comparison and demonstrate gains over static selection approaches. However, we observe that the mechanism to iteratively fit the non-linear surrogate in our approach helps the model clearly distinguish between borderline arms and top-$m$ arms through sampling of most ambiguous arms and reduction of the gap between them. However, in classical LTR training approaches, there is no principled mechanism to sample ambiguous borderline arms and only adopt heuristic negative sampling without considering downstream task performance, leading to subpar performance: ablation in Table \ref{tab:ltr-dynamic-perf}. We also perform an ablation where we replace the proposed gap-index based exploration with random exploration (\namedyn{} (random)). The performance comparison demonstrates that our proposed gap-index exploration leads to better performance compared to random exploration. A qualitative analysis of selected ICL demonstrations is discussed in Appendix \ref{sec:qualitative}.

\begin{table}[h]
\small
\centering

\begin{tabular}{lccc}
\toprule
\textbf{Total \# LLM calls} & \textbf{GSM8K} & \textbf{AquaRat} & \textbf{WMT19} \\
\midrule
CASE & 10200 & 4360 & 87 \\
\name{}$_{\text{static}}$ & \textbf{2600} & \textbf{1580} & \textbf{48} \\
\bottomrule
\end{tabular}
\caption{Results comparing the number of LLM calls.}
\label{tab:llm_calls}
\end{table}
\subsection{Efficiency Comparison}
\label{sec:efficiency}

To answer \textbf{RQ3}, we compare the convergence time of CASE and \name{} (Figure \ref{fig:runtime_comparison}), inference time across the three datasets. We observe that convergence time of \name{} is less than CASE (providing approximately \textbf{2x} speedup on AquaRAT and \textbf{2.35 x} speedup on GSM8K) for identifying top-$m$ arms. We observe that this is primarily due to \textbf{faster convergence} of \name{} based on stopping criterion. This is because the non-linear ranking-based surrogate in \name{}  quickly learns to distinguish between top-$m$ arms and borderline challenger arms by sampling better ambiguous arms through auxiliary objective. This is also evident from comparison of gap-index across rounds between CASE and \name{} in Figure \ref{fig:gap_index}. For instance, in AquaRAT, \name{} converges in \emph{79 rounds}, whereas CASE requires \emph{238 rounds} and for GSM8k, \name{} requires \emph{130} rounds but CASE requires \emph{510} rounds. We also observe the sample-efficiency of \name{} from Table \ref{tab:llm_calls}, where \name{} significantly reduces the number of LLM calls (by \textbf{63-75\%} approx.) during the top-m selection using an auxiliary objective (offline) compared to prior approaches like CASE.

We also plot the average inference times per query on test set. Particularly, we compare \namedyn{} with static CASE to measure the latency overhead of surrogate ranker at inference time in $\name{}_{dynamic}$. We observe from Figure \ref{fig:inference_runtime_comparison}, that the latency overhead is negligible with $\name{}_{dynamic}$, only a few milliseconds over static CASE. Additionally, we also compare the average runtime/query of \namedyn{} with other approaches (Table \ref{tab:inference_time}).  $CASE_{dynamic}$ applies bandit selection at test time, resulting in high runtime. In contrast, \namedyn{}
 is more efficient due to its lightweight ranker, avoiding extra inference overhead unlike NeuralNDCG. PiRank has similar runtime (similar architecture) but performs worse. All static methods (EXPLORA, etc.) have same runtime as CASE (Figure \ref{fig:inference_runtime_comparison})


\section{Conclusion}

In this work, we propose a sample-efficient gap-index based MAB framework (\name{}) as auxiliary objective to fit a non-linear surrogate that models example subset scores to enhance ICL. \name{} provides a mechanism to learn ranking surrogates from LLM feedback for instance-level example subsets selection.  \name{} converges faster than existing MAB algorithms. It adds  only negligible overhead during inference for instance-level setting. The proposed algorithm can also be extended to other ranking tasks in the future with appropriate loss function changes. 

\section{Acknowledgments}
The experiments were enabled by resources provided by the National Academic Infrastructure for Supercomputing in Sweden (NAISS), partially funded by the Swedish Research Council through grant agreement no. 2022-06725 (proposal NAISS 2025/5-631) and Zeus cluster provided by the Department of Computer and Systems Sciences at Stockholm University.

\section{Limitations}
While our approach proposes a novel bandit framework for enhancing In-context learning, accounting for complex non-linear reward relationships, it has a few limitations. Firstly, the task relies on supervised question-answer pairs which may not be available for all domains. However, the feature design in our current algorithm can be easily modified to first select a subset of questions and augment them with answers using LLMs or other data augmentation mechanisms. Since this is not in the scope of the current work, we defer this for future work. Additionally, while we provide sample-complexity bounds as our core objective is to reduce the number of LLM calls for efficiency, regret bounds could also be useful. While we do currently demonstrate empirically convergence of regret and gap-index, theoretical regret bounds are hard to derive for gap-index frameworks. We would refer to existing gap-index frameworks for linear bandits \cite{reda2021top}, which also do not focus on regret bounds and only on sample-complexity. Since we propose a new gap-index framework for non-linear stochastic settings, this required the development of a new theoretical framework for sample-complexity, as shown in the paper, and hence our current work does not focus on regret bounds. However, we believe that our algorithm is useful for settings beyond ICL and can be extended by the community with regret bounds and new advances. Additionally, our framework could be extended to other relaxed versions of learning-to-rank losses, which is also an avenue for further research.

\section{Ethical Considerations and Risks}

We use only publicly available
math reasoning/machine translation datasets that do not contain
private or harmful information. While we employ
LLMs for QA systems in our experiments, we do
not prompt them in a format that would elicit harmful or biased information.




\bibliography{custom}

\appendix

\clearpage
\newpage

\appendix

\section{Appendix}
\begin{table*}[hbt!] 

\begin{tabular}{c|c|c|c}
\multicolumn{1}{l|}{\textbf{Framework}} & \textbf{Loss function}                                                      & \textbf{Hyperparameters}                                                             & \textbf{Network architecture}   \\ \hline
\textbf{PiRank}                         & PiRank surrogate loss                                                       & Presented in Section 4                                                                                             & (256, 256,128,64)               \\ \hline
{Other LTR baselines}                & Neural NDCG                                                                 & \begin{tabular}[c]{@{}c@{}}N = 2, \\ $d_{ff}$ = 384, \\ h = 1, \\ dropout = 0.1\end{tabular}  & (768,96)                   \\ \cline{2-4} 
                                        & ListNet                                                                     & \begin{tabular}[c]{@{}c@{}}N = 4, \\ $d_{ff}$ = 512, \\ h = 2, \\ dropout = 0.3\end{tabular}  & (768,128)                  \\ \cline{2-4} 
                                        & Neural NDCG                                                                 & \begin{tabular}[c]{@{}c@{}}N = 4, \\ $d_{ff}$ = 512, \\ h = 4, \\ dropout  = 0.3\end{tabular} & (768,96)                   \\ \cline{2-4} 
                                        & \begin{tabular}[c]{@{}c@{}}Neural NDCG With \\ Normalized data\end{tabular} & \begin{tabular}[c]{@{}c@{}}N = 2, \\ $d_{ff}$ = 384, \\ h = 1, \\ dropout = 0.1\end{tabular}  & (768,96)                   \\ \cline{2-4} 
                                        & NDCGLoss 2++                                                                & -                                                                                             & (256, 512, 1024, 512, 256)
\end{tabular}
\caption{Details of hyperparameters used in different LTR model configurations. Categorized by loss function and framework.}
\label{tab:ltr_params}
\end{table*}

\begin{table}[h]
\small
\begin{tabular}{p{3cm}ccc}
\toprule
\textbf{Task performance} & \textbf{GSM8K} & \textbf{AquaRat} & \textbf{WMT19} \\
\midrule
NDCGLoss 2++ & 72.11 & 49.00 & 74.06 \\
NeuralNDCG & 69.83 & 48.30 & 72.75 \\
NeuralNDCG (norm. data) & 72.06 & 49.21 & 72.97 \\
UDR & 71.63 & 48.03 & 72.65 \\
 ListNet & 71.79 & 46.06 & 73.64 \\
\textit{DEARICL}$_{dynamic}$ & \textbf{75.66} & \textbf{55.11} & \textbf{78.89} \\
\bottomrule
\end{tabular}
\caption{Ablation results: Training  LTR approaches on training data collected across rounds by \name{} leveraging llama3.2:3b}
\label{tab:data_ablation}
\end{table}
\begin{table*}[t]

    \centering

    \begin{tabular}{llll}

    \toprule
     \textbf{Method}& \multicolumn{1}{l}{\textbf{GSM8K}}& \multicolumn{1}{l}{\textbf{AquaRat}} & \multicolumn{1}{l}{\textbf{WMT}}  

     \\

    \midrule

         \colorg \textbf{Task level} & \colorg & \colorg & \colorg  \\
    
        EXPLORA \cite{purohit-etal-2024-explora} & 93.63 & 69.29 & 84.55  \\
        
 LENS \citep{li2023finding} &76.19 &64.56 & 83.57  \\
           Static CASE &  91.13  & 73.23  & 83.49  \\
  $\name{}_{static}$ & 94.84 & 77.16 & 86.25  \\

       \colorg \textbf{Instance level} & \colorg & \colorg & \colorg  \\


 

$CASE_{dynamic}$ \cite{case}   & 92.19              &  76.77   & 86.36     \\


    \textbf{\namedyn{}} & 94.66 & \textbf{81.88} $\dagger$ & \textbf{92.41}$\dagger$  \\



     \bottomrule
    \end{tabular}

    \caption{Results across datasets (we use 5-shot for all methods) using gpt-4o-mini. $\dagger$ indicates statistical significance (t-test) over $CASE_{dynamic}$ at 0.05 level 
    }
    \label{tab:main_result}

\end{table*}


\begin{table*}[t]

    \centering

    \begin{tabular}{llll}

    \toprule
     \textbf{Method}& \multicolumn{1}{l}{\textbf{GSM8K}}& \multicolumn{1}{l}{\textbf{AquaRat}} & \multicolumn{1}{l}{\textbf{WMT}}  

     \\

    \midrule

         \colorg \textbf{Task level} & \colorg & \colorg & \colorg  \\
    
 \color{black}
        EXPLORA \cite{purohit-etal-2024-explora} &  82.63 & 68.10 & 78.59  \\
    \color{black}     
 LENS \citep{li2023finding} & 77.33 & 57.87 & 76.98  \\
  \color{black}
           Static CASE &  83.09  &  69.29 & 78.26  \\
            \color{black}
  $\name{}_{static}$ & 86.12 & 70.47 & 79.27   \\

       \colorg \textbf{Instance level} & \colorg & \colorg & \colorg  \\


 
 \color{darkblue}
$CASE_{dynamic}$ \cite{case}   & 85.98             &  69.68   &  79.78    \\


 \color{darkblue}
    \textbf{\namedyn{}} & \textbf{88.40} $\dagger$ & \textbf{74.41} $\dagger$ & \textbf{81.50}  \\



     \bottomrule
    \end{tabular}
    
    \caption{\textcolor{black}{Results across datasets using Deepseek-R1:7b (we use 5-shot for all methods). $\dagger$ indicates statistical significance (t-test) over $CASE_{dynamic}$ at 0.05 level }}
    \label{tab:deepseek}

\end{table*}
\color{black}

\section{Dataset Description}
\label{sec:datasets}
  \textbf{AquaRAT}: It comprises 100,000 algebraic word problems in the train set with dev and test set each comprising 254 problems. The problems are provided along with answers and rationales providing the step-by-step solution to the problem.

          \textbf{GSM8K}: This dataset consists of linguistically diverse math problems that require multi-step reasoning. The dataset consists of 8.5K problems and we evaluate on the test set of 1319 questions.

          \textbf{WMT 19}: We focus on en-zh (english-chinese) translation split. Test sets are a few thousand sentences (for example, 3,981 in WMT18 for zh-en direction for test). Train set has 173k english-chinese sentence pairs.

\section{LTR Hyperparameters}

The hyperparameters for learning to rank baselines are detailed in Table \ref{tab:ltr_params}.

\begin{figure*}[!t]
\small
\begin{subfigure}{0.25\linewidth}
\begin{tikzpicture}
\begin{axis}[
    ylabel style = {font= \tiny},
    xlabel style = {font= \small},
    xlabel=Rounds,
    xticklabel style={font=\small},
    yticklabel style={font=\boldmath \tiny},
    ylabel=Gap Index,
    height=4.3cm,
    legend style={
                    font= \tiny,
                },
    width=4cm,
    xmin=1, xmax=120,
    ymin=0, ymax=1.5]
    ytick={0,0.1,0.2,0.3}
    xtick={1,10,20,30,40...,80,100,120}
\addplot[smooth,mark=*,red] plot coordinates {
    (1,1.01)
    (20,0.40)
    (40,0.36)
    (60,0.27)
    (80,0.22)
    (100,0.16)
    (120,0.08)};
\addlegendentry{\name{}}
\addplot[smooth,mark=*,blue] plot coordinates {
   (1,0.95)
    (20,0.44)
    (40,0.22)
    (60,0.15)
    (80,0.14)
    (100,0.12)
    (120,0.12)};
\addlegendentry{CASE}
\end{axis}
\end{tikzpicture}
\caption{Gap Index \\Comparison (GSM8K)}
\label{fig:gap_index_gsm8k}
\end{subfigure}
\hspace{-2em}
\begin{subfigure}{0.3\linewidth}
\begin{tikzpicture}
\begin{axis}[
    ylabel style = {font= \tiny},
    xlabel style = {font= \small},
    xlabel=Rounds,
    xticklabel style={font=\small},
    yticklabel style={font=\boldmath \tiny},
    ylabel=Gap Index,
    height=4.3cm,
    legend style={
                    font= \tiny,
                },
    width=4cm,
    xmin=1, xmax=30,
    ymin=0, ymax=2.5]
    ytick={0,0.1,0.2,0.3}
    xtick={1,10,20,30,40...,80,100,120}
\addplot[smooth,mark=*,red] plot coordinates {
    (1,2.48)
    (5,1.75)
    (10,0.97)
    (15,0.57)
    (20,0.51)
    (25,0.32)
    (29,0.07)};
\addlegendentry{\name{}}
\addplot[smooth,mark=*,blue] plot coordinates {
   (1,0.62)
    (5,0.57)
    (10,0.51)
    (15,0.49)
    (20,0.41)
    (25,0.37)
    (30,0.31)};
\addlegendentry{CASE}
\end{axis}
\end{tikzpicture}

\caption{Gap Index \\Comparison (WMT19)}
\label{fig:gap_index_wmt}
\end{subfigure}
\hspace{-2em}
\begin{subfigure}{0.25\linewidth}
\begin{tikzpicture}
\begin{axis}[
    ylabel style = {font= \tiny},
    xlabel style = {font= \small},
    xlabel=Rounds,
    xticklabel style={font=\small},
    yticklabel style={font=\boldmath \tiny},
    ylabel=Simple Regret,
    height=4.3cm,
    legend style={
                    font= \tiny,
                },
    width=4cm,
    xmin=1, xmax=140,
    ymin=0, ymax=1]
    ytick={0,0.1,0.2,0.3}
    xtick={1,10,20,30,40...,80}
\addplot[smooth,mark=*,red] plot coordinates {
    (1,0.65)
    (20,0.59)
    (40,0.57)
    (60,0.48)
    (80,0.31)
    (100,0.22)
    (120,0.13)
    (130,0.08)};
\addlegendentry{\name{}}
\addplot[smooth,mark=*,blue] plot coordinates {
    (1,0.75)
    (20,0.44)
    (40,0.31)
    (60,0.28)
    (80,0.21)
    (100,0.22)
    (120,0.13)
    (130,0.12)};
\addlegendentry{CASE}
\end{axis}
    \end{tikzpicture}
\centering
\caption{Simple Regret \\ Comparison (GSM8K)}
\label{fig:simple_regret_gsm8k}
\end{subfigure}
\begin{subfigure}{0.25\linewidth}
\begin{tikzpicture}
\begin{axis}[
    ylabel style = {font= \tiny},
    xlabel style = {font= \small},
    xlabel=Rounds,
    xticklabel style={font=\small},
    yticklabel style={font=\boldmath \tiny},
    ylabel=Simple Regret,
    height=4.3cm,
    legend style={
                    font= \tiny,
                },
    width=4cm,
    xmin=1, xmax=30,
    ymin=0, ymax=1]
    ytick={0,0.1,0.2,0.3}
    xtick={1,10,20,30,40...,80}
\addplot[smooth,mark=*,red] plot coordinates {
    (1,0.43)
    (5,0.43)
    (10,0.42)
    (15,0.34)
    (20,0.23)
    (25,0.14)
    (29,0.04)};
\addlegendentry{\name{}}
\addplot[smooth,mark=*,blue] plot coordinates {
    (1,0.56)
    (5,0.50)
    (10,0.48)
    (15,0.34)
    (20,0.26)
    (25,0.26)
    (29,0.25)};
\addlegendentry{CASE}
\end{axis}
    \end{tikzpicture}
\centering
\caption{Simple Regret \\ Comparison (WMT19)}
\label{fig:simple_regret_wmt}
\end{subfigure}
\caption{Top-$m$ arm identification by \name{}, CASE for GSM8k, WMT19.   (a,b) Gap Index ($B_t(ch_t,b_t)$) comparison and (c,d) Simple regret comparison}
\label{fig:regret_gap_gsm}
\end{figure*}

\begin{table*}[h!]
\centering
\small
\begin{threeparttable}

\begin{tabularx}{\textwidth}{@{}lX@{}}
\toprule
\textbf{Question} &
A travel company wants to charter a plane to the Bahamas. Chartering the plane costs \$5,000.
So far, 12 people have signed up for the trip. If the company charges \$200 per ticket, how many
more passengers must sign up for the trip before the company can make any profit on the charter? \\
\midrule
\textbf{Method} & \textbf{Exemplars} \\
\midrule
\textbf{NeuralNDCG} &
\textbf{Question:} A group consists of 4 couples in which each of the 4 boys has one girlfriend. In how many ways can they be arranged in a straight line such that boys and girls occupy alternate positions? \\[0.5ex]
&
\textbf{Question:} A man cycling along the road noticed that every 12 minutes a bus overtakes him and every 4 minutes he meets an oncoming bus. If all buses and the cyclist move at a constant speed, what is the time interval between consecutive buses? \\[0.5ex]
&
\textbf{Question:} Find large number from below question. The difference of two numbers is 1365. On dividing the larger number by the smaller, we get 6 as quotient and the 15 as remainder? \\[0.5ex]
&
\textbf{Question:} 64, 81, 100, 144, 196, ?, Find the missing number(?). \\[0.5ex]
\midrule
\textbf{\namedyn{}} &

\textbf{Question:} A certain sum is invested at simple interest at 18\% p.a. for two years instead of investing at 12\% p.a. for the same time period. Therefore the interest received is more by Rs.~840. Find the sum? \\
&
\textbf{Question:} An investor can sell her MicroTron stock for 36\$ per share and her Dynaco stock for
60\$ per share, If she sells 300 shares altogether, some of each stock, at an average price per share
of 40\$, how many shares of Dynaco stock has she sold? \\[0.5ex]
&
\textbf{Question:} Anna left for city A from city B at 5.20 a.m. She traveled at the speed of 80 km/hr for 2 hrs 15 min. After that the speed was reduced to 60 km/hr. If the distance between two cities is 350 kms, at what time did Anna reach city? \\[0.5ex]
&
\textbf{Question:} On a sum of money, the S.I. for 2 years is Rs.~660, while the C.I. is Rs.~696.30, the rate of interest being the same in both the cases. The rate of interest is? \\[0.5ex]

\bottomrule
\end{tabularx}
\begin{tablenotes}\footnotesize
\item Rationale is not shown to conserve space. However, in our experiments, all exemplars for AquaRat include rationales.
\item The test instance is a Profit, Loss \& Interest problem that requires algebra. Subset selected by \namedyn{} contains three similar questions(1, 2.,4.), whereas NeuralNDCG contains none.
\end{tablenotes}
\end{threeparttable}
\caption{Exemplar subsets selected by  NeuralNDCG and \namedyn{} using LLama3.2:3b for a particular AquaRat test
instance where \namedyn{} had a correct answer and the NeuralNDCG model did not.}
\label{tab:aquarat-exemplars}

\end{table*}
\begin{table}[hbt!]

\begin{tcolorbox}[title= AQUA Prompt]
\small
\textbf{Instruction}:\texttt{You are a helpful, respectful, and honest assistant helping to solve math word problems or tasks requiring reasoning or math. Follow given examples and solve the problems in step by step manner.}

\paragraph{\textbf{Exemplars}}:

[Question]: \textit{ The average age of three boys is 45 years and their ages are in proportion 3:5:7. What is the age in years of the youngest boy?}

[Options]: A) 9, B) 10, C) 11, D) 12, E) 13

[Explanation]: \textsf{$3x + 5x + 7x = 45$, \\ $x =3$, \\ $3x = 9$}

[Answer]: \textcolor{teal}{\textsf{The option is A}}
\\
\dots \\
\dots 
\paragraph{\textbf{Test Input}}: Question: {}
Options: {}

Explanation: [INS]
Answer: [INS]

\end{tcolorbox}
\captionof{figure}{Prompt for Aqua}
\label{prompt:aqua} 
\end{table}

\begin{table}[hbt!]
\begin{tcolorbox}[title= GSM8K Prompt]
\small
\textbf{Instruction}:\texttt{You are a helpful, respectful and honest assistant helping to solve math word problems or tasks requiring reasoning or math. Follow given examples and solve the problems in step by step manner.}

\paragraph{\textbf{Exemplars}}:

[Question]: \textit{ Samir just turned half the age Hania was 10 years ago. If in five years Hania will be 45 years old, what will Samir’s age be five years from now?}

[Explanation]: \textsf{ If in five years, Hania will be 45 years old, currently she is $45 - 5 = 40$ years old.
Samir just turned half the age Hania was 10 years ago, which means she is $30/2 = 15$ years old.
In five years, Samir will be $15 + 5 = 20$ years old.}

[Answer]: \textcolor{teal}{\textsf{20 years old}}
\\
\dots \\
\dots 
\paragraph{\textbf{Test Input}}: Question: {}

Explanation: [INS]
Answer: [INS]

\end{tcolorbox}
\captionof{figure}{Prompt for GSM8K}
\label{prompt:gsm8k}
\end{table}
\section{Gap and simple regret analysis for GSm8K and WMT19}
\label{app:gap_regret}

Similar to the gap index and simple regret analysis on AquaRAT in Section \ref{sec:regret_gap}, we also plot the gap indices and simple regret across rounds for GSM8K and WMT 2019 as shown in Figure \ref{fig:regret_gap_gsm}. We observe a similar trend where gap index between ambiguous arms approaches $\epsilon$, the stopping criterion and the regret also minimizes across rounds demonstrating the empirical convergence of \name{}.

\section{Results on alternative LLMs}
\label{app:alternative_llm}
We also evaluate static and dynamic versions of our proposed approach $\name{}$ using alternative LLMs like gpt-4o-mini and Deepseek-R1:7B (DeepSeek-R1-Distill-Qwen-7B). We choose these LLMs as it has shown relatively stable performance across benchmarks. We extract the most competitive baselines and method from Table \ref{tab:ltr-dynamic-perf} and evaluate them using gpt-4o-mini and Deepseek-R1:7B (DeepSeek-R1-Distill-Qwen-7B). All our experiments are carried out in a transfer setting where exemplars selected using Llama3.2:3b in the optimization loop 
are employed directly for inference on test set using gpt-4o-mini or Deepseek. The results using gpt-4o-mini are as shown in Table \ref{tab:main_result} and results for and Deepseek-R1:7B (DeepSeek-R1-Distill-Qwen-7B) are shown in Table \ref{tab:deepseek}.

\section{Data ablation and extension to other LTR loss functions}
We also perform data ablations where we train the LTR approaches on training data collected across rounds by \name{} to isolate the effect of our auxiliary objective tailored to our setup. The results are shown in Table \ref{tab:data_ablation}. We observe that when compared to Table \ref{tab:ltr-dynamic-perf}, the performance of some baselines show improvement on some benchmarks. But they still fall behind \namedyn{}.

Though training on quality subsets in logged data from \namedyn{} helps improve performance, they miss the benefit of the online adaptive MAB outer loop, optimizing for auxiliary objective. Note that our proposed learning process uses utility from the surrogate, which is also continuously updated from feedback and based on a specific ranking loss. Hence, the borderline arms are adaptively sampled based on the gap-index, which depends on the estimated utility from surrogate. 

Our framework is agnostic to the surrogate architecture: any non-linear scorer can be substituted in the inner loop. We use a lightweight architecture because the ranker is invoked per test instance at inference, where it delivers a runtime advantage over per-instance bandit search. Our choice of lightweight architecture additionally keeps the network's Lipschitz constant controlled. The one component that interacts with our theory is the ranking loss. Our sample-complexity analysis is derived via uniform stability, which requires the inner-loop surrogate loss to be 
$L$-Lipschitz ; Lemma \ref{lemma:lipschitz} establishes this property for our relaxed-NDCG formulation. A stability-controlled loss enables reliable learning from the small, adaptively collected sample that the gap-index acquisition produces. Other LTR objectives can be substituted in the inner loop empirically; however, carrying the sample-complexity related guarantees over requires establishing the analogous constants for each loss (beyond scope of this work and is reserved for future work), which is why our analysis is scoped to relaxed-NDCG, and extending Lemma \ref{lemma:lipschitz}'s analysis to further losses is a natural follow-up work.

\section{Dataset prompts}

The prompts are given in Figures \ref{prompt:aqua}, \ref{prompt:gsm8k}. The prompts for WMT19 are in the github repo.

\section{Qualitative Analysis}
\label{sec:qualitative}
We also analyzed the quality of a few demonstration samples selected for the test instances. One such example is shown in Table \ref{tab:aquarat-exemplars}. The remaining selected ICL samples are provided in the GitHub repo.  The test instance is a Profit, Loss \& Interest problem that requires algebra. Subset selected by \namedyn{} contains three ICL samples (1, 2,4) that contain the required skills to solve the test instance, whereas NeuralNDCG contains none.

\section{Proof for Theorem \ref{theorem1}}
\label{sec:grass_proof}
\paragraph{Notation and setup.}
Let $\mathcal{A}$ be the arm set, $|\mathcal{A}|=K$. At round $t$ the PiRank surrogate (with dropout)
outputs $N$ stochastic predictions (N MC-dropout forward passes) per arm $a$:
\[
\{y_a^{(k)}\}_{k=1}^N,\qquad \hat\rho_t(a):=\frac{1}{N}\sum_{k=1}^N y_a^{(k)}.
\]
Define the sample mean difference and sample variance for pair $(i,j)$:
\begin{align*}
\bar d_{ij}:=\frac{1}{N}\sum_{k=1}^N\big(y_i^{(k)}-y_j^{(k)}\big),\\
\widehat V_{ij,t}:=\frac{1}{N}\sum_{k=1}^N\big(y_i^{(k)}-y_j^{(k)}-\bar d_{ij}\big)^2.
\end{align*}
Let $c_t$ be an annealed multiplier (user-specified) and define
\begin{align*}
B_t(ch_t,b_t):=\hat\rho_t(ch_t)-\hat\rho_t(b_t)+W_t(b_t,ch_t).
\end{align*}
In order to prove Lemma \ref{lemma:m-GRASS_bound}, we first need to demonstrate that event $\cE$ hold with probability $\ge 1-\delta$ for \name{}. However, the estimated means $\hat{\rho_t}$ and $W_t(i,j)$ for any two arms $i,j$ is computed in a  different manner for \name{} than existing linear stochastic bandit frameworks like CASE and GIFA. Hence, we establish that the confidence event $\cE$ holds here by deriving an upper bound on pairwise-gap error as explained below. This is one of our main \textbf{contributions} which further helps in deriving a high complexity upper bound on sample complexity.

To recap the event is defined as,
$\cE \triangleq \bigcap_{t > 0} \bigcap_{i,j \in \ARMS} \Big(\rho_i-\rho_j \in [-B_t(j,i), B_t(i,j)]\Big),$
Expanding the event $\cE$

\[\rho_i-\rho_j \ge (\hat \rho_i(i)-\hat \rho_i(j)) - W_t(i,J)  \]

and symmetrically, 

\[\rho_i-\rho_j \le (\hat \rho_i(i)-\hat \rho_i(j)) + W_t(i,j)  \]

Hence it follows,
\[(\hat \rho_i(i)-\hat \rho_i(j)) - (\rho_i-\rho_j) \le W_t(i,j)\]

Let 
\[
\mathbb{E}_{ij}(t)
:= \bigl(\hat\rho_t(i) - \hat\rho_t(j)\bigr) - \bigl(\rho(i) - \rho(j)\bigr)
\]

denote the pairwise-gap error. This error can be further decomposed as follows based on the source of randomness / errors

\begin{align*}
\mathbb{E}_{ij}(t)
&= \underbrace{\begin{aligned}\bigl[\bigl(\hat\rho_t(i) - \hat\rho_t(j)\bigr) 
 \\- \bigl(\mathbb{E}_{\text{drop}}[y_i] - \mathbb{E}_{\text{drop}}[y_j]\bigr)\bigr]\end{aligned}}_{\textbf{(Term 1) MC Dropout  noise}}\\
& \hspace{-3em}  + \underbrace{\begin{aligned}\bigl[\bigl(\mathbb{E}_{\text{drop}}[y_i] -  \mathbb{E}_{\text{weights,drop}}[y_i]\bigr)
- \\ \hspace{-1em} \bigl(\mathbb{E}_{\text{drop}}[y_j] - \mathbb{E}_{\text{weights,drop}}[y_j]\bigr)\bigr]\end{aligned}}_{\textbf{(Term 2) Weight randomness / SGD-induced drift}} \\
& \hspace{-3em} +\underbrace{\begin{aligned}[t]\bigl[\bigl(\mathbb{E}_{\text{weights,drop}}[y_i]- \rho(i)\bigr) 
\\  -\bigl(\mathbb{E}_{\text{weights,drop}}[y_j] -\rho(j)\bigr)\bigr]\end{aligned}}_{\textbf{(Term 3) Model bias}}
\end{align*}
Each of the above terms can be bounded individually to prove the following theorem.
Restating Theorem \ref{theorem1},

\paragraph{Theorem (MC-Dropout gap concentration).}
In a fixed-confidence setting,  $\delta \in (0,1)$, with probability at least $1-\delta$, 
for all pairs $i,j \in \mathcal{A}$:
\[
\begin{aligned}
\big| (\hat{\rho}_t(i) - \hat{\rho}_t(j)) - (\rho(i)-\rho(j)) \big|
\\ \le c_t \sqrt{ \tfrac{2 \widehat V_{ij,t} \log(2K^2/\delta)}{N} }
+ \varepsilon^{\text{stab}}_t + b_i + b_j 
\\ \quad + \tfrac{4M \log(2K^2/\delta)}{3N}.
\end{aligned}
\]
where $\widehat V_{ij,t}$ is the empirical variance of MC-dropout differences,
$\varepsilon^{\mathrm{stab}}_t$ is the SGD stability error, and $b_i,b_j$ are
surrogate approximation biases.

\textbf{Term 1 - Monte-Carlo Dropout Noise}

For arms i,j we define Monte Carlo Dropout samples of their differences as:

\[Z_k = (y_i(k)-y_j(k)) - (\mathbb{E}_{\text{drop}}[y_i]-\mathbb{E}_{\text{drop}}[y_j]) \]

Our goal is to bound the empirical mean
$\hat Z=\frac{1}{N}\sum_{k=1}^N Z_{k}$ which is equivalent to Term 1.

Assuming $Z_k$'s are independent ( as dropout masks are independent) and bounded, we apply Bernstein's inequality as it states that,

For $\{Z_k\}_{k=1}^{N}$ be independent, mean-zero random variables with 
$|Z_k|\le b$ almost surely, and let 
\[
\hat{Z}_N := \frac{1}{N}\sum_{k=1}^{N} Z_k, 
\qquad 
\sigma^2 := \mathrm{Var}(Z_k).
\]
Then, for any $\epsilon > 0$, the (scalar) Bernstein inequality states:
if $Z_1,\dots,Z_N$ are independent,
mean-zero, and satisfy $|Z_k|\le b$, then for any $\epsilon>0$,
\[
\Pr\!\Bigl(\bigl|\bar{Z}\bigr| \ge \epsilon\Bigr) 
\;\le\;
2 \exp\!\!\left(
-\frac{N \epsilon^2}{2\sigma^2 + \frac{2}{3}b\epsilon}
\right).
\]

Here $b=2M$ and $\sigma^2=\mathrm{Var}(y_i - y_j)$.

To get a confidence radius $\epsilon$
 such that the event holds with probability at least $1-\delta$ we set
\[
2 \exp\!\!\left(
-\frac{N \epsilon^2}{2\sigma^2 + \frac{2}{3}b\epsilon}
\right) = \delta.
\]

This yields the inequality:

\[
2   \left(
-\frac{N \epsilon^2}{2\sigma^2 + \frac{2}{3}b\epsilon}
\right) \ge log \delta.
\]

hence,

\[
   \left(
\frac{N \epsilon^2}{2\sigma^2 + \frac{2}{3}b\epsilon}
\right) \ge log (2/\delta).
\]

Following \cite{Maurer2009EmpiricalBB,lattimore2020bandit} we aim to solve the inequality:

\[
   \left(
\frac{N \epsilon^2}{2\sigma^2 + \frac{2}{3}b\epsilon}
\right) \ge log (2/\delta).
\] 
This can be expressed as a quadratic in $\epsilon$,

\[N \epsilon^2 - \dfrac{2}{3}b\epsilon (log(\dfrac{2}{\delta})) -  2 \sigma^2 log(\dfrac{2}{\delta}) \ge 0  \]

We are interested in the smallest positive \(\varepsilon\) (denoted  $\epsilon^*$) that makes the Bernstein exponent achieve the desired log level. Now,
solving the quadratic equation and using this principle we get,
\begin{equation}
\small
 \epsilon^* =  
\frac{\frac{2}{3} b \log(2/\delta) + 
\sqrt{\left( \frac{2}{3} b \log(2/\delta) \right)^2 + 8 N \sigma^2 \log(2/\delta)}}{2N}.
    \label{label:inequality_epsilon}
\end{equation}

To obtain an upper bound for square root term we use the inequality,

$\sqrt{a^2+x} \le a + \sqrt{x} $, where $a=\left( \frac{2}{3} b \log(2/\delta) \right)$ and $x=8 N \sigma^2 \log(2/\delta)$

hence,
\begin{align*}\sqrt{\left( \frac{2}{3} b \log(2/\delta) \right)^2 + 8 N \sigma^2 \log(2/\delta)} \le \\ \left( \frac{2}{3} b \log(2/\delta) \right) +  \sqrt{8 N \sigma^2 \log(2/\delta)} \end{align*}

Using above in Equation \ref{label:inequality_epsilon}

\[\epsilon^* \le  \frac{\frac{4}{3} b \log(2/\delta)  +  \sqrt{8 N \sigma^2 \log(2/\delta)}} {2N} \]

\[
\epsilon^* \;\le\;
\sqrt{\frac{2\sigma^2 \log (2/\delta)}{N}}
+ \frac{2b \log (2/\delta)}{3N}.
\]

Thus, with probability at least $1-\delta$, (recall that generally $\Pr\!\Bigl(\bigl|\bar{Z}\bigr| \le \epsilon \Bigr) \ge 1-\delta$)
\[
\bigl|\bar{Z}\bigr|
\;\le\;
\sqrt{\frac{2 \sigma^2 \log (2/\delta)}{N}}
+ \frac{4M \log (2/\delta)}{3N}.
\]

 Replacing $\sigma^2$ by the empirical variance:
 
Define the sample variance estimator
\begin{align*}
\widehat{V}_{ij,t} := 
\frac{1}{N}\sum_{k=1}^N \Bigl[
\bigl(y_i^{(k)} - y_j^{(k)}\bigr) - \bar{d}_{ij}
\Bigr]^2, \\
\qquad
\bar{d}_{ij} := \frac{1}{N} \sum_{k=1}^N 
\bigl(y_i^{(k)} - y_j^{(k)}\bigr).
\end{align*}

Then by concentration of empirical variance ( via Bernstein or Bennett bounds),
$\widehat{V}_{ij,t}$ is close to $\sigma^2$ with high probability, so we may plug
$\widehat{V}_{ij,t}$ into the bound:
\[
\bigl|\bar{Z}\bigr|
\;\le\;
{\sqrt{\frac{2 \widehat{V}_{ij,t} \log (2/\delta)}{N}}
+ \frac{4M \log (2/\delta)}{3N}}.
\]
It is to be noted that we interpret the MC-dropout variance as solely as a variance estimator and we derive concentration bounds as standard in bandit analysis.

\textbf{Union bound over all pairs.}
We require the inequality to hold for all pairs $(i,j) \in \mathcal{A}$ simultaneously.
Since there are at most $K^2$ ordered pairs, set
\[
\delta = \frac{\delta}{K^2}.
\]

By a union bound, with probability at least $1-\delta$,
\begin{align*}
\forall i,j \in \mathcal{A}:
\bigl[\bigl(\hat\rho_t(i) - \hat\rho_t(j)\bigr) 
- \bigl(\mathbb{E}_{\text{drop}}[y_i] \\ - \mathbb{E}_{\text{drop}}[y_j]\bigr)|
\\ \le W_t(i,j).
\end{align*}

This defines the desired pairwise confidence width $W_t(i,j)$ under Monte Carlo dropout.



\textbf{Term 2 — Weights Randomness / SGD Stability}

We need to bound

\[
\bigg| \mathbb{E}_{\text{drop}}[y_a] - \mathbb{E}_{\text{weights, drop}}[y_a] \bigg|,
\]

that is, the gap between the conditional dropout mean (given current weights trained on the dataset) and the expectation over randomness in the training set and weights.

To do this it is first essential that the predictions of the differentiable sorting surrogate does not deviate a lot in each round of arm sampling. This translates to proving that for one-epoch SGD the \textit{uniform stability} criterion holds.

This criterion guarantees that a randomized algorithm is uniformly
stable, if for all data sets differing in only one element, the learned models produce nearly the same
predictions. This is applicable to our setup, as in each round after sampling reward from a arm, this new sample (arm+reward) is added to the training set to update the non-linear surrogate with SGD simulating a single epoch of NN training.

For one-epoch SGD, \cite{hardt_lipschitz} show uniform stability bounds of the form

\[
\sup_{z} \big| \ell(\text{SGD}(S), z) - \ell(\text{SGD}(S^{(i)}), z) \big| \leq \varepsilon^{\text{stab}}_t,
\]
, where $S$ and $S^{(i)}$ differ in atmost one data sample and $\ell(\text{SGD}(S^{(i)}), z)$ denotes the loss of the randomized algorithm (Diffsort neural network). 
which can be translated to a bound on predictions. Under our conditions (bounded gradients $G$ and \textbf{Lipschitz loss $L$ - as shown in Lemma \ref{lemma:lipschitz}}), one-epoch SGD has stability that decays with the dataset size and step size; we encapsulate this as $\varepsilon^{\text{stab}}_t$.

Concretely, there exist constants (depending on $G$, $L$, $\eta_t$) such that $\forall a:$

\[
\big| \mathbb{E}_{\text{dropout}}[y_a \mid \text{weights}] - \mathbb{E}_{\text{weights, dropout}}[y_a] \big| \leq \varepsilon^{\text{stab}}_t.
\]

Hence for the pair $(i,j)$, the contribution is at most $2 \varepsilon^{\text{stab}}_t$; we absorb a factor of 2 into the constant and state the theorem with one $\varepsilon^{\text{stab}}_t$ representing the pairwise bound (or keep $+\varepsilon^{\text{stab}}_t$ per side — we used one in the statement for brevity).

\paragraph{Intuition:} Because we train only one epoch per new sample, the model is only mildly unstable: removing or adding one sample cannot arbitrarily change predictions. That bounded change becomes a bias term in the final gap bound.

\textbf{Term 3 -  Model Bias }:

The per arm bias can be defined as:

\[
b_a := \big|\mathbb{E}_{\mathrm{weights,drop}}[y_a] - \rho(a)\big|.
\]

By the triangle inequality,
 \begin{align*}
\big|\mathbb{E}_{\mathrm{weights,drop}}[y_i] - \rho(i)\big| - \\ \big|\mathbb{E}_{\mathrm{weights,drop}}[y_j] - \rho(j)\big|  \\ \le \big|\mathbb{E}_{\mathrm{weights,drop}}[y_i] - \rho(i)\big| + \\ \big|\mathbb{E}_{\mathrm{weights,drop}}[y_j] - \rho(j)\big| 
= b_i + b_j
\end{align*}

Hence,
\begin{align*}\big|\mathbb{E}_{\mathrm{weights,drop}}[y_i] - \rho(i)\big| - \\\big|\mathbb{E}_{\mathrm{weights,drop}}[y_j] - \rho(j)\big| \\ \le b_i + b_j\end{align*}

This implies that the model-bias contribution to the pairwise error is at most the sum of the two per-arm biases. If the surrogate is well-specified $b_a=0$.

Where empirically,  $b_a$ can be computed as the deviation in empirical mean of the arm over the rounds with respect to a moving average of estimated empirical means over past rounds.

Combining bounds for Term 1 ,2 and 3 yields the expression for $W_t$ in Theorem \ref{theorem1}. The event $\cE$ holds for this $W_t$ which is one of our \textbf{main theoretical contributions}. Then the proof for Lemma \ref{lemma:m-GRASS_bound} follows from \cite{reda2021top} which yields a high probability upper bound on sample complexity as stated in Theorem \ref{th:upper_bounds_linear_topm}. We include proof of Lemma \ref{lemma:m-GRASS_bound} in Appendix \ref{sec:lemma_proof} for completion. We then derive the \textbf{high probability upper bound on sample complexity } In Appendix \ref{sample_complexity}.

\begin{lemma}
    Let $\mathrm{NDCG}_\tau(\hat{\boldsymbol{\rho}}, \mathbf{r})$ denote the differentiable (temperature-$\tau$) relaxation of NDCG obtained via a soft permutation matrix $P_\tau(\hat{\boldsymbol{\rho}})$. Assume:
(i) $\|\mathbf{r}\|_\infty \le R_{\max}$,
(ii) the discount function $D(\cdot)$ is bounded by $D_{\max}$,
(iii) $P_\tau$ is $L_P$-Lipschitz with $L_P = O(1/\tau)$.
Then the loss $L(\hat{\boldsymbol{\rho}}, \mathbf{r}) = -\widehat{\mathrm{NDCG}}_\tau(\hat{\boldsymbol{\rho}}, \mathbf{r})$ is $L$-Lipschitz in $\hat{\boldsymbol{\rho}}$ with
\[
L \le \frac{R_{\max} D_{\max}}{Z(\mathbf{r})} \, L_P = O\!\left(\frac{1}{\tau}\right).
\]
\label{lemma:lipschitz}
\end{lemma}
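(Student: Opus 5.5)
The plan is to write the relaxed NDCG explicitly as a composition of the soft permutation map with a linear functional. Then the Lipschitz constant in $\hat{\boldsymbol{\rho}}$ follows from the chain rule, or more elementarily from a difference bound, with the constant $L_P$ coming from assumption (iii). Concretely, for a list of $n$ arms with rewards $\mathbf{r}$, set
\[
\widehat{\mathrm{NDCG}}_\tau(\hat{\boldsymbol{\rho}},\mathbf{r}) = \frac{1}{Z(\mathbf{r})}\, \mathbf{g}(\mathbf{r})^\top P_\tau(\hat{\boldsymbol{\rho}})\, \mathbf{D},
\]
where $\mathbf{g}(\mathbf{r})$ is the vector of gains, $\mathbf{D}$ is the vector of position discounts $D(1),\dots,D(n)$, and $Z(\mathbf{r})$ is the ideal DCG normalizer, which does not depend on $\hat{\boldsymbol{\rho}}$. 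This is the PiRank form in which the hard permutation is replaced by the row-stochastic matrix $P_\tau$. I will take the gains to be the rewards themselves (linear gain), so that $|g_i|\le R_{\max}$ by (i). If the exponential gain $2^{r_i}-1$ is used instead, $R_{\max}$ is simply replaced by $2^{R_{\max}}-1$.

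Fix $\mathbf{r}$ and take two score vectors $\hat{\boldsymbol{\rho}}$ and $\hat{\boldsymbol{\rho}}'$. The loss difference is
\[
\frac{1}{Z(\mathbf{r})}\,\bigl|\mathbf{g}^\top\bigl(P_\tau(\hat{\boldsymbol{\rho}})-P_\tau(\hat{\boldsymbol{\rho}}')\bigr)\mathbf{D}\bigr|.
\]
This bilinear form is bounded by $\|\mathbf{g}\|_\infty\,\|\mathbf{D}\|_\infty$ times the entrywise $\ell_1$ norm of the matrix difference. That gives at most
\[
\frac{R_{\max}D_{\max}}{Z(\mathbf{r})}\,\|P_\tau(\hat{\boldsymbol{\rho}})-P_\tau(\hat{\boldsymbol{\rho}}')\|_{1,1}.
\]
Applying (iii), interpreted as Lipschitzness from $\hat{\boldsymbol{\rho}}$ in the chosen vector norm to $\|\cdot\|_{1,1}$, yields the claimed constant $L\le R_{\max}D_{\max}L_P/Z(\mathbf{r})$. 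A matching-norm variant such as Frobenius to $\ell_2$ changes only dimension-dependent factors, which are absorbed into the $O(\cdot)$.

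To justify $L_P=O(1/\tau)$ for the NeuralSort/PiRank construction, I will recall that row $i$ of $P_\tau$ is a softmax of $\bigl((n+1-2i)\hat{\boldsymbol{\rho}} - A_{\hat{\boldsymbol{\rho}}}\mathbf{1}\bigr)/\tau$, where $A$ collects the pairwise absolute differences. The softmax is $1$-Lipschitz, the absolute value is $1$-Lipschitz, and the inner affine map has operator norm $O(n)$. The composition is therefore $O(n/\tau)$-Lipschitz, which is $O(1/\tau)$ for a fixed list size. Finally, $Z(\mathbf{r})$ is bounded below by a positive constant whenever some reward in the list is nonzero, and lists with all-zero rewards contribute zero loss. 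So the constant is uniform over the data, as the stability argument for Term 2 in the proof of Theorem \ref{theorem1} requires.

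The main obstacle is bookkeeping of norms. The norm in which (iii) is stated has to match the dual pairing used to bound $\mathbf{g}^\top(\Delta P)\mathbf{D}$, and the hidden dependence on the list length $n$ and on $1/Z(\mathbf{r})$ has to be controlled. I will handle this by fixing $\|\cdot\|_{1,1}$ on matrices and $\|\cdot\|_\infty$ on scores throughout, and by stating explicitly that $n$ and the lower bound on $Z$ are treated as constants.
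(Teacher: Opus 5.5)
Your proposal is correct and follows the paper's proof step for step. Like the paper, you write the relaxed NDCG as linear in $P_\tau$, bound the difference by $R_{\max}D_{\max}/Z(\mathbf{r})$ times $\|P_\tau(\hat{\boldsymbol{\rho}})-P_\tau(\hat{\boldsymbol{\rho}}')\|_{1,1}$, and invoke (iii). Your NeuralSort justification of $L_P=O(1/\tau)$ goes beyond the paper, which simply assumes it; note that summing your per-row $O(n/\tau)$ bound over the $n$ rows gives $O(n^2/\tau)$ in $\|\cdot\|_{1,1}$, which is harmless for fixed $n$.

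One caution about your uniformity remark: with continuous rewards such as BertScore, having some nonzero reward does not bound $Z(\mathbf{r})$ below by a fixed constant. The fix is that for nonnegative gains $Z(\mathbf{r})\ge\|\mathbf{g}\|_\infty D(1)$. Using the list's own $\|\mathbf{g}\|_\infty$ in place of $R_{\max}$ then gives the uniform constant $L\le D_{\max}L_P/D(1)$ directly.
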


\paragraph{Proof.}
The relaxed NDCG can be written as
\[
\widehat{\mathrm{NDCG}}_\tau(\hat{\boldsymbol{\rho}}, \mathbf{r})
= \frac{1}{Z(\mathbf{r})} \sum_{i,j} P_\tau(i,j;\hat{\boldsymbol{\rho}})\, r_j\, D(i),
\]
which is linear in $P_\tau$. Let $\hat{\boldsymbol{\rho}}_1, \hat{\boldsymbol{\rho}}_2$ be arbitrary. Then
\begin{align*}
&\big|\widehat{\mathrm{NDCG}}_\tau(\hat{\boldsymbol{\rho}}_1, \mathbf{r})
- \widehat{\mathrm{NDCG}}_\tau(\hat{\boldsymbol{\rho}}_2, \mathbf{r})\big| \le \\
&  \frac{1}{Z(\mathbf{r})}
\sum_{i,j} \big|P_\tau(i,j;\hat{\boldsymbol{\rho}}_1) - P_\tau(i,j;\hat{\boldsymbol{\rho}}_2)\big|\, |r_j|\, |D(i)| \\
&\le \frac{R_{\max} D_{\max}}{Z(\mathbf{r})}
\sum_{i,j} \big|P_\tau(i,j;\hat{\boldsymbol{\rho}}_1) - P_\tau(i,j;\hat{\boldsymbol{\rho}}_2)\big| \\
&\le \frac{R_{\max} D_{\max}}{Z(\mathbf{r})}
\, \|P_\tau(\hat{\boldsymbol{\rho}}_1) - P_\tau(\hat{\boldsymbol{\rho}}_2)\|_1 \\
&\le \frac{R_{\max} D_{\max}}{Z(\mathbf{r})} \, L_P \,
\|\hat{\boldsymbol{\rho}}_1 - \hat{\boldsymbol{\rho}}_2\|.
\end{align*}
where $L_P$ denotes the Lipschitz constant of the soft permutation operator $P_\tau$. Thus $L(\hat{\boldsymbol{\rho}}, \mathbf{r}) = -\widehat{\mathrm{NDCG}}_\tau(\hat{\boldsymbol{\rho}}, \mathbf{r})$ is L-Lipschitz.

\section{Sample Complexity Upper Bound}
\subsection{Proof of Lemma \ref{lemma:m-GRASS_bound}}
\label{sec:lemma_proof}

\label{sec:sample_complexity_proof}
\begin{proof}
    We primarily follow the proof structure of GIFA framework \citep{reda2021top} and \cite{case}.

    \textbf{Preliminaries Recap}:
    Let $\TOPM$ be the true set of top-$m$ arms and $(S_m^*)^c$ denote the true set remaining worst arms.  The gap-index between any two arms $i,j$ is computed as: $B_t(i,j)=\hat{\rho}_t(i)-\hat{\rho}_t(j)+W_t(i,j)$.

    , where \begin{align}
        W_t = c_t \sqrt{ \tfrac{2 \widehat V_{ij,t} \log(2K^2/\delta)}{N} }
+ \varepsilon^{\text{stab}}_t + b_i + b_j \notag \\
 + \tfrac{4M \log(2K^2/\delta)}{3N}.
 \label{eq:w_t}
    \end{align}
as derived in the proof for Theorem \ref{theorem1}
    To prove Lemma \ref{lemma:m-GRASS_bound}, we introduce the following property,
    
    \textbf{Property 1}: For $b_t \in U_t$ and $ch_t \in C_t$
    it holds that $\EMPMU{b_t}{t}\ge\EMPMU{ch_t}{t}$.
    Hence, it follows that $B_t(ch_t,b_t) = \EMPGAP{ch_t}{b_t}{t} + W_t(b_t,ch_t) \le W_t(b_t,ch_t)$ as $\EMPGAP{ch_t}{b_t}{t}<0$
From property 1, we can establish that $B_t(ch_t,b_t) \le W_t(b_t,ch_t) $.
Hence, to show that \[B_t(ch_t, b_t)  \leq -({\Delta(b_t)} \lor \Delta(ch_t))+3W_t(b_t,ch_t)\] we consider the following scenarios:
\paragraph{(i)} \textbf{$b_t \in \TOPM$ and $ch_t \notin \TOPM$}: In that case, 
\[\Delta(b_t) = \rho(b_t) - \rho(m+1) ; \Delta(ch_t) = \rho(m) - \rho(ch_t)\]  is the true gap of the arms.

 As event $\cE$ holds from Theorem \ref{theorem1} and Appendix \ref{sec:grass_proof},
 \begin{align*}
 B_t(ch_t,b_t) = -B_t(b_t,ch_t)+2W_t(b_t,ch_t) \\ \leq\Delta(ch_t,b_t)+2W_t(b_t,ch_t)
 \end{align*}
 
 As $ch_t \notin \TOPM$, 
 \[\rho(ch_t) \leq \rho(m+1)\] \[ \Delta(ch_t,b_t) \leq  \rho(m+1) - \rho(b_t) = -{\Delta(b_t)}\] 
 
 But as $b_t \in \TOPM$, it also holds that $\rho(b_t) \geq \rho(m)$, and $\Delta(ch_t,b_t) \leq  \rho(ch_t) - \rho(m) = -{\Delta(ch_t)}$. Hence, 
  
 \begin{align*}
 B_t(ch_t, b_t) \leq -({\Delta(b_t)} \lor {\Delta(ch_t)})+2W_t(b_t,c_t) \\ \leq -(\Delta(b_t) \lor {\Delta(ch_t)})+3W_t(b_t,c_t).
 \end{align*}

\paragraph{(ii)} \textbf{ $b_t \notin \TOPM$ and $ch_t \in \TOPM$ }:
\[ \Delta(ch_t) = \rho(ch_t) - \rho(m+1) ; \]  
\[\Delta(b_t) = \rho(m) - \rho(b_t)\]

By Property 1,
\begin{align*}
    B_t(ch_t,b_t) \le W_t(b_t,ch_t) \\ \le \hat\Delta_t(b_t,ch_t)  + W_t(b_t,ch_t)  = B_t(b_t,ch_t) 
\end{align*}

as $\hat\rho_t(b_t) \ge \hat\rho_t(ch_t)$. Further, as $\cE$ holds, 

\begin{align*}
B_t(b_t,ch_t) = -B_t(ch_t,b_t) +  2W_t(b_t,ch_t)  \\ \le \Delta(b_t,ch_t)+  2W_t(b_t,ch_t)
\end{align*}

As $b_t \notin \TOPM$, $\rho(b_t) \le \rho(m+1)$
and hence $\Delta(b_t,ch_t) \le \rho(m+1)-\rho(ch_t) = -\Delta(ch_t)$
As $ch_t \in \TOPM$, $\rho(ch_t) \ge \rho(m)$
and hence $\Delta(b_t,ch_t) \le \rho(b_t) - \rho(m) = -\Delta(b_t)$. Hence, 

\begin{align*}
B_t(ch_t, b_t) \leq -({\Delta(b_t)} \lor {\Delta(ch_t)})+2W_t(b_t,c_t) \\ \leq -(\Delta(b_t) \lor {\Delta(ch_t)})+3W_t(b_t,c_t).
\end{align*}

\paragraph{(iii)} \textbf{$b_t \notin \TOPM$ and $ch_t \notin \TOPM$}: 
We state that there exists a $b \in \TOPM$ that belongs to $C_t$.  At any time t,
    \[        M_t \leftarrow random \ m' arms from (U_t \cup C_{t-1})^c   \]
       \[ C_t \leftarrow \mbox{top}_{m'}(M_t \cup C_{t-1}; \hat{\rho}_{(t-1)})\]
       Due to the above sampling approach adopted for $C_t$ which captures the next m' arms with the highest means, it follows by definition of estimated means that $C_t$ captures at least one arm in $\TOPM$.
Given that $\cE$ holds and $b \in \TOPM$,
\[W_t(b_t,ch_t) \ge B_t(ch_t,b_t) \ge B_t(b,b_t)\]

$ch_t$ by the definition is one of the most ambiguous arms posing largest threat to $b_t$ as it has the largest gap with respect to $b_t$ $B_t(ch_t,b_t) \ge B_t(b,b_t)$. Hence, $ B_t(ch_t,b_t) \ge B_t(b,b_t)$. From this and event $\cE$ it follows 
\begin{align*}B_t(ch_t,b_t) \ge B_t(b,b_t) \ge \rho(b)- \rho(b_t) \\ \ge \rho(m)-\rho(b_t)\end{align*}.
Hence $ W_t(b_t,ch_t) \ge B_t(ch_t,b_t) \ge \Delta(b_t)$. Using event $\cE$,

\begin{align*}
\ B_t(ch_t,b_t) \le \Delta(ch_t,b_t) + 2W_t(b_t,ch_t) \\ = (\rho(ch_t)-\rho(m))+ \\(\rho(m)-\rho(b_t)) + 2 W_t(b_t,ch_t)
\end{align*}

From above Eq and since $B_t(ch_t,b_t) \ge \Delta(b_t)$, 

\begin{align*}
B_t(ch_t,b_t) \le -\Delta(ch_t) + \Delta(b_t) + 2 W_t(b_t,ch_t) \\ \le -\Delta(ch_t) + 3 W_t(b_t,ch_t)
\end{align*}

Also from Property 1 and $W_t(b_t,ch_t) \ge \Delta(b_t)$, it holds that

\begin{align*}B_t(ch_t,b_t) \le W_t(b_t,ch_t) = - W_t(b_t,ch_t) \\+ 2W_t(b_t,ch_t)  \le -\Delta(b_t) +2 W_t(b_t,ch_t) \\ \le -\Delta(b_t) + 3W_t(b_t,ch_t) 
\end{align*}

Hence $B_t(ch_t, b_t) \leq -(\Delta(b_t) \lor {\Delta(ch_t)})+3W_t(b_t,c_t)$.

\paragraph{(iv)} \textbf{$b_t \in \TOPM$ and $ch_t \in \TOPM$}:
Then there exists a $s \notin S_m^*$ and $s \in U_t$
In that case, 
\begin{align*}\Delta(b_t) = \rho(b_t) - \rho(m+1) ; \Delta(ch_t) \\ = \rho(ch_t) - \rho(m+1)\end{align*}
 
Also by definition of $b_t$ and $ch_t$, it holds that $B_t(ch_t,b_t) = \max_{i\in U_t} \max_{j\in C_t} \left[ B_t(j,i) \right] $
 Since there exists $s \in U_t$ and $ch_t \in C_t$,
 \begin{align*}
 B_t(ch_t,b_t) = \max_{i\in U_t} \max_{j\in C_t} \left[ B_t(j,i) \right]  \\ \ge \max_{j\in C_t}  B_t(j,s)  \ge B_t(ch_t,s)  \\ \ge \rho(ch_t)-\rho(s) \ge \rho(ch_t)-\rho(m+1)
 \end{align*}

 As $\rho(ch_t)-\rho(m+1) = \Delta(ch_t)$,
 $B_t(ch_t,b_t) \ge \Delta(ch_t)$
 By property 1, $B_t(ch_t,b_t) \le W_t(b_t,ch_t)$.
 Hence, \[\Delta(ch_t) \le B_t(ch_t,b_t) \le W_t(b_t,ch_t)\]

 On event $\cE$ it follows that
 $B_t(ch_t,b_t) \le \rho(ch_t)-\rho(b_t) + 2 W_t(b_t,ch_t)$ as $(B(ch_t,b_t) \le W_t(b_t,ch_t)$. Then $\rho(ch_t)-\rho(b_t)$ can be expressed as $\rho(ch_t) - \rho(m+1) + \rho (m+1) -\rho(b_t)$. hence,

 \begin{align*}
      B_t(ch_t,b_t) \le \rho(ch_t) - \rho(m+1)   + \rho (m+1)  \\ -\rho(b_t) + 2W_t(b_t,ch_t)    \\ \le \Delta(ch_t) - \Delta(b_t) + 2W_t(b_t,ch_t)
 \end{align*}
 
 We already know that $B_t(ch_t,b_t) \ge \Delta(ch_t)$ resulting in,
 
\[(a) \  B_t(ch_t,b_t) \le - \Delta(b_t) + 3W_t(b_t,ch_t)\]

 Now to prove $B_t(ch_t,b_t) \le - \Delta(ch_t) + 3W_t(b_t,ch_t)$, we rely on property 1,
 \begin{align*}B(ch_t,b_t) \le W_t(b_t,ch_t) \\ \le -W_t(b_t,ch_t) + 2 W_t(b_t,ch_t)\end{align*}
 As $W_t(b_t,ch_t) \ge \Delta(ch_t)$, $-W_t(b_t,ch_t) \le -\Delta(ch_t)$. Hence,

\begin{align*}
  (b)   B(ch_t,b_t) \le W_t(b_t,ch_t) \le -W_t(b_t,ch_t) \\ + 2 W_t(b_t,ch_t)  \le - \Delta(ch_t) + W_t(b_t,ch_t) \\ \le -\Delta(ch_t) + 3 W_t(b_t,ch_t)
\end{align*}
  
 From (a) and (b)  
 
 \begin{equation}
     B_t(ch_t, b_t) \leq -(\Delta(b_t) \lor {\Delta(ch_t)})+3W_t(b_t,c_t)
     \label{lemma1_formula}
 \end{equation}
\end{proof}

\subsection{Proof Blueprint for Theorem \ref{th:upper_bounds_linear_topm}}
\label{sample_complexity}
\begin{proof}

We now convert \eqref{lemma1_formula} into sampling bounds by using the stopping
rule and the explicit form of \(W_t\). The intuition is similar to Lemma~8 in
GIFA \cite{reda2021top}, where once the stopping rule \(B_t(ch_t,b_t)\le\varepsilon\) triggers, arms
with non-zero gap must have been sampled enough times so that the width is
small relative to the gap. We invert this relation to obtain a per-arm bound.

\paragraph{Stopping rule.}
Assume the algorithm stops when
\[
B_t(ch_t,b_t)\le\varepsilon.
\]
On the event \(\mathcal{E}\), by Lemma~\ref{lemma:m-GRASS_bound} at time t $<$ stopping time we have
\begin{align*}
\varepsilon \le B_t(ch_t,b_t)
\le -(\Delta(b_t)\vee\Delta(ch_t)) \\ + 3W_t(b_t,ch_t).
\end{align*}
Rearrange to get
\[
3W_t(b_t,ch_t) \ge \varepsilon + (\Delta(b_t)\vee\Delta(ch_t)).
\]

\[
W_t(b_t,ch_t) \ge \frac{\epsilon + \Delta_a}{3}
\]

Hence, for any arm \(a\) that remains active (i.e. is sampled further until
elimination), when it is sampled at time \(t\) its associated width at that
time must satisfy the above inequality (with \(a\) playing the role of \(b_t\)
or \(ch_t\) in the identity). Substituting leading term of $W_t(i,j)$ (first term in
Equation \ref{eq:w_t}):
$c_t\sqrt{\frac{2\widehat V_{b_t,ch_t,t}\,log(2 K^2 / \delta)}{N}}$ (we ignore stability and bias terms for clarity and also because they are negligible in sample complexity upper bound derivation)

\[
c_t\sqrt{\frac{2\widehat V_{b_t,ch_t,t}\,log(2 K^2 / \delta)}{N}} \ge \frac{\epsilon + \Delta_a}{3}
\]

Since  the predictive variance decreases with the number of arm samples $n_t(a)$

\[\widehat V_{b_t,ch_t,t} \le \frac{\sigma_{a,t} ^2}{n_t(a)}\]

Where $\sigma_{a,t}^2$ is the effective variance of arm $a$

Substituting this in the inequality from earlier we get,

\[
c_t \sigma_{a,t}\sqrt{\frac{2\,log(2 K^2 / \delta)}{N . n_t(a)}} \ge \frac{\epsilon + \Delta_a}{3}
\]

Taking square on both sides

\[
c_t^2 \sigma_{a,t}^2\frac{2\,log(2 K^2 / \delta)}{N . n_t(a)} \ge \frac{(\epsilon + \Delta_a)^2}{9}
\]

\begin{align*}
n_t(a) \le 18 
c_t^2 \sigma_{a,t}^2\frac{log(2 K^2 / \delta)}{N  {(\epsilon + \Delta_a)^2}}  
\end{align*}

To account for case when $\Delta_a$ is tiny we replace $(\epsilon + \Delta_a)^-2$ yielding

\begin{align*}
n_t(a) \le  18 
c_t^2 \sigma_{a,t}^2\frac{log(2 K^2 / \delta)}{N  } .\\  \max\!\Big\{ \varepsilon^{-2},\; \big(\tfrac{\varepsilon+\Delta_a}{3}\big)^{-2} \Big\} 
\end{align*}




\paragraph{Per-arm and total bounds.}
Formally,  on event
\(\mathcal{E}\), for every arm \(a\),
\begin{align}\label{eq:per-arm-bound}
\mathcal{N}_T(a) \le 18 
c_t^2 \sigma_{a,t}^2\frac{log(2 K^2 / \delta)}{N  } \cdot
\notag\\ \max\!\Big\{ \varepsilon^{-2},\; \big(\tfrac{\varepsilon+\Delta(a)}{3}\big)^{-2} \Big\}.
\end{align}
Summing over arms yields the total-sample upper bound
\begin{align}\label{eq:total-bound}
T \le 18 
c_t^2 \frac{log(2 K^2 / \delta)}{N  } \sum_{a\in\mathcal{A}} \sigma_{a,t}^2\cdot
\notag \\ \max\!\Big\{ \varepsilon^{-2},\; \big(\tfrac{\varepsilon+\Delta(a)}{3}\big)^{-2} \Big\}.
\end{align}

 \end{proof}
The above equation leads to the upper bound on sample complexity as stated in Theorem \ref{th:upper_bounds_linear_topm}.



\qed

\end{document}